\documentclass[10pt]{article} 

\usepackage[accepted]{tmlr}


\usepackage{amsmath,amsfonts,bm}









\def\eqref#1{equation~\ref{#1}}









\def\1{\bm{1}}

\def\eps{{\epsilon}}










\DeclareMathAlphabet{\mathsfit}{\encodingdefault}{\sfdefault}{m}{sl}
\SetMathAlphabet{\mathsfit}{bold}{\encodingdefault}{\sfdefault}{bx}{n}













\usepackage{hyperref}
\usepackage{url}

\usepackage{mathtools}
\newcommand{\widebar}[1]{\mkern 1.5mu\overline{\mkern-1.5mu#1\mkern-1.5mu}\mkern 1.5mu}
\usepackage{amssymb}
\usepackage{amsthm}
\usepackage{bm}
\usepackage{multicol}
\usepackage{multirow}
\usepackage{wrapfig}
\usepackage{lipsum}
\usepackage{enumitem}

\usepackage{xcolor}
\definecolor{C0}{HTML}{0070c0}

\theoremstyle{plain}
\newtheorem{theorem}{Theorem}[section]
\newtheorem{lemma}[theorem]{Lemma}
\newtheorem{proposition}[theorem]{Proposition}

\theoremstyle{definition}
\newtheorem{definition}[theorem]{Definition}

\theoremstyle{remark}

\usepackage[utf8]{inputenc} 
\usepackage[T1]{fontenc}    
\usepackage{hyperref}       
\usepackage{url}            
\usepackage{booktabs}       
\usepackage{amsfonts}       
\usepackage{nicefrac}       
\usepackage{microtype}      
\usepackage{xcolor}         

\title{Cross Entropy versus Label Smoothing:  A Neural Collapse Perspective}


\author{
    \hspace{-0.6em} 
    \name Li Guo \thanks{Corresponding authors}  \email lg154@nyu.edu \\
    New York University Shanghai
    \AND
    \name George Andriopoulos \email ga73@nyu.edu \\
    New York University Abu Dhabi
    \AND
    \name Zifan Zhao \email zz4330@nyu.edu \\
    New York University Shanghai
    \AND
    \name Zixuan Dong \email zd662@nyu.edu \\
    New York University Shanghai
    \AND
    \name Shuyang Ling \email sl3635@nyu.edu \\
    New York University Shanghai
    \AND
    \name Keith Ross \footnotemark[1] \email keithwross@nyu.edu \\
    New York University Abu Dhabi
}



\begin{document}

\maketitle

\begin{abstract}
    Label smoothing is a widely adopted technique to mitigate overfitting in deep neural networks. This paper studies label smoothing from the perspective of Neural Collapse (NC), a powerful empirical and theoretical framework which characterizes model behavior during the terminal phase of training.
    We first show empirically that models trained with label smoothing converge faster to neural collapse solutions and attain a stronger level of neural collapse compared to those trained with cross-entropy loss. Furthermore, we show that at the same level of NC1, models under label smoothing loss exhibit intensified NC2. These findings provide valuable insights into the impact of label smoothing on model performance and calibration.
    Then, leveraging the unconstrained feature model, we derive closed-form solutions for the global minimizers under both label smoothing and cross-entropy losses. We show that models trained with label smoothing have a lower conditioning number and, therefore, theoretically converge faster.
    Our study, combining empirical evidence and theoretical results, not only provides nuanced insights into the differences between label smoothing and cross-entropy losses, but also serves as an example of how the powerful neural collapse framework can be used to improve our understanding of DNNs. 
    \looseness=-1
\end{abstract}

\section{Introduction}
\label{intro}

The effectiveness of a deep neural network (DNN) hinges significantly on the choice of the loss function during training. While cross-entropy loss is one of the most popular choices for classification tasks, many alternatives with improved empirical performance have been proposed. Among these, label smoothing \citep{szegedy2016rethinking, muller2019does, lukasik2020does} has emerged as a common technique to enhance the performance of DNNs.  Instead of supervising the model training with one-hot key labels, label smoothing preprocesses the label for each sample by taking a convex combination of the one-hot label and a uniform distribution
 This procedure is generally understood as a means of regularization for improving the model's generalizability.  \looseness=-1

In this paper, we examine the benefits of label smoothing from the perspective of neural collapse \citep{papyan2020prevalence}, a new and powerful framework for obtaining an improved understanding of DNNs.  

\noindent
{\bf Background and Related Work}

Deep neural networks (DNNs) typically consist of a non-linear feature extractor and a linear classification layer. Neural collapse, first observed in \citep{papyan2020prevalence}, occurs in DNNs for classification tasks where the data is balanced and cross-entropy loss is employed. It characterizes the geometric properties of the features produced by the feature extractor and the weight vectors of the classifier during the terminal phase of training (TPT): \looseness=-1

\begin{itemize}
    \item NC1: The learned features of samples from the same class approach their respective class means. 
    \looseness=-1
    \item NC2: The collapsed features from different classes and the classification weight vectors approach the vertices of a simplex equiangular tight frame (ETF).
    \item NC3: Up to rescaling, the linear classifier weights
     approach the corresponding class means, giving a self-dual configuration. 
\end{itemize}

Other than cross-entropy loss,  neural collapse phenomena have been observed and studied under the mean-squared loss as well \citep{han2021neural, poggio2020explicit, zhou2022optimization}. Furthermore, research has extended neural collapse investigations to scenarios involving imbalanced data \citep{fang2021exploring,hong2023neural,thrampoulidis2022imbalance, dang2023neural, hong2023neural} and cases with a large number of classes \citep{jiang2023generalized}.

In addition to being observed empirically, neural collapse has also been proven to arise mathematically.
Beginning with \cite{mixon2020neural}, a series of studies have employed approximation models such as the Unconstrained Feature Models (UFMs) \citep{mixon2020neural} or layer-peeled models \citep{fang2021exploring} to provide theoretical evidence for the emergence of neural collapse. 
These optimization models simplify a DNN by treating the last-layer features as free variables to optimize over, which is justified due to the expressiveness of DNNs 
\citep{hornik1991approximation,cybenko1989approximation,lu2017expressive, shaham2018provable}. Theoretical advancements in neural collapse not only enhance our understanding of DNNs but also inspire new techniques to improve their performance in diverse applications, such as imbalanced learning \citep{xie2023neural, liu2023inducing}, transfer learning \citep{galanti2021role,li2022principled}, and continual learning \citep{yu2022continual, yang2023neural}, etc.

Several studies have explored UFMs under various loss functions and regularization strategies \citep{wojtowytsch2020emergence, zhu2021geometric, dang2023neural, lu2022neural, tirer2022extended, tirer2023perturbation, yaras2022neural, zhou2022all}. Particularly, Zhou et al.\citep{zhou2022all}  
established that under the UFM model, 
the global minimizers for a wide range of loss functions, including cross-entropy loss with one-hot labels and  smoothed labels (label smoothing), exhibit the idealized neural collapse properties. Additionally, they demonstrated that the UFM model has a benevolent landscape that enables the global minimizer to be effectively attained using iterative algorithms. 
However, their investigation does not provide insights into why label smoothing consistently outperforms cross-entropy loss, or why label smoothing converges faster during training.  
In this paper, leveraging the neural collapse framework, we conduct an in-depth investigation of training under these loss functions, aiming to explain the reasons behind the observed superiority of label smoothing over cross-entropy loss with one-hot labels.
\looseness=-1

\noindent
{\bf Our Contributions}

Our study begins with a comprehensive empirical comparison between cross-entropy loss with one-hot labels (hereafter referred to as cross-entropy loss for simplicity) and label smoothing throughout the training process. Specifically, we carefully study how the last layer features and linear classifiers evolve during training. Our findings are as follows:
\looseness=-1
\begin{itemize}
    \item [1.] Compared with cross-entropy loss, models trained with label smoothing exhibit accelerated convergence in terms of training error and neural collapse metrics. Furthermore, they converge to a more pronounced level of NC1 and NC2.  \looseness = -1
    
    \item [2.] 
    Along with accelerated convergence, label smoothing maintains a distinct balance between NC1 and NC2. Notably, as compared with cross-entropy loss, label smoothing results in a more pronounced level of NC2 when reaching a comparable level of NC1. We argue that this phenomenon originates from the implicit inductive bias introduced by label smoothing, which equalizes the logits of all non-target classes and thus promotes the emergence of a simplex ETF structure in both the learned features and classification weight. We posit that the emphasis on NC2 in label smoothing, which promotes maximally separable features between classes, enhances the model's generalization performance. Conversely, an excessive level of NC1 may cause the features to overly specialize in the training data, hindering the model’s ability to generalize effectively.
    \looseness=-1
    

    \item [3.]  
    Models trained with smoothed labels exhibit improved calibration \citep{guo2017calibration} by implicitly regularizing classification weights and features during training.  However, if temperature scaling \citep{guo2017calibration} is applied as post-processing to counteract the regularization effect, label smoothing can lead to deteriorated model calibration. 
    This is because an excessive level of NC1 under label smoothing can cause the model to be overconfident in its predictions, even when they are incorrect, thereby negatively impacting model calibration.   \looseness=-1

    
\end{itemize}

Of equal importance to the empirical results, we perform a mathematical analysis of the convergence properties of the UFM models under cross-entropy loss and label smoothing. While Zhou et al. \citep{zhou2022optimization} demonstrate that, for a broad class of loss functions, including cross-entropy loss and label smoothing, the global minimizers exhibit neural collapse properties, the authors neither derive the exact form of these global minimizers nor thoroughly examine the landscape around them.
\begin{enumerate}
    \item We first derive closed-form solutions for the global minimizers under both loss functions, which explicitly depend on the smoothing hyperparameter $\delta$.
    \item Utilizing these closed-form solutions, we conduct a second-order theoretical analysis of the optimization landscape around their respective global optimizers. Within the UFM context, our mathematical analysis reveals 
    that label smoothing exhibits a more well-conditioned landscape around the global minimum, which facilitates the faster convergence observed in our empirical study. \looseness=-1
\end{enumerate}


This paper provides a significantly deeper understanding of why label smoothing provides better convergence and performance than cross-entropy loss. Additionally, the paper illustrates how the powerful framework of neural collapse and its associated mathematical models can be employed to gain a more nuanced understanding of the ``why'' of DNNs.
However, we acknowledge that theoretical evidence about how neural collapse impacts model generalization is lacking in this study. We hope that our work will inspire future research into the intricate interplay between neural collapse, convergence speed, and model generalizability. \looseness=-1

\section{Preliminaries}
\label{sec:preliminary}

\subsection{The Problem Setup}


A deep neural network is comprised of two key components: a feature extractor and a linear classifier. The feature extractor $\phi_{\bm{\theta}}(\cdot)$ is a nonlinear mapping that maps the input $\bm{x}$ to the corresponding feature embedding 
$\bm{h}:= \phi_{\bm{\theta}}(\bm{x}) \in  \mathbb{R}^d$. 
Meanwhile, the linear classifier involves a weight matrix $\bm{W} = [\bm{w}_1, \bm{w}_2, \cdots, \bm{w}_K] \in  \mathbb{R}^{d\times K}$ 
and a bias vector $\bm{b} \in \mathbb{R}^K$. Consequently, the architecture of a deep neural network is captured by the following equation:
\begin{equation}
    f_{\Theta}(\bm{x}) := \bm{W}^\top \phi_{\bm{\theta}}(\bm{x}) + \bm{b},  
\end{equation}
where $\Theta:=\{\bm{\theta}, \bm{W}, \bm{b}\}$ represents the set of all  model parameters. 
In this work, we consider training a deep neural network using a balanced dataset denoted as $\{ (\bm{x}_{ki}, \bm{y}_{ki}) \}_{1\leq k \leq K, 1\leq i \leq n }$. This dataset consists of samples distributed across $K$ distinct classes, with $n$ samples allocated per class. 
Here, $\bm{x}_{ki}$ represents the $i$-th sample from the $k$-th class, and $\bm{y}_{ki}$ is a one-hot vector with unity solely in the $k$-th entry. Our objective is to learn the parameters $\Theta$ by minimizing the empirical risk over the total $N=nK$ training samples:
\begin{equation} \label{eq:l}
    \min_{\Theta}\frac{1}{N} \sum_{k=1}^K\sum_{i=1}^n {l} 
    \left( f_{\Theta}(\bm{x}_{ki}), \bm{y}_{ki} \right) 
    + 
    \frac{\lambda}{2} \|\Theta\|^2_{F} , 
\end{equation}
where $ l(\cdot, \cdot)$ denotes the chosen loss function, and $\lambda > 0$ is the regularization parameter (i.e., the weight decay parameter). 


\subsection{Training Losses} \label{sec:loss}

To simplify the notation, we use 
$\bm{z} = \bm{W}^\top \phi_{\bm{\theta}}(\bm{x}) + \bm{b}$ 
to represent the network's output logit vector for a given input $\bm{x}$, 
and $\bm{p}=\text{Softmax}(\bm{z})$ to denote the predicted distribution from the model. 
The cross-entropy between the target distribution $\bm{y}$ and the predicted distribution $\bm{p}$ is defined as $l_{CE}(\bm{y}, \bm{p})=- \sum_{k} y_k \log(p_k)$, 
where $\bm{y}$ is a one-hot vector with a value of $1$ in the dimension corresponding to the target class.  
In contrast, label smoothing minimizes the cross-entropy between the smoothed soft label $\bm{y}^{\delta}$ and the predicted distribution $\bm{p}$, denoted as $l_{CE}(\bm{y}^{\delta}, \bm{p})$, where the soft label $\bm{y}^{\delta}= (1-\delta)\bm{y} + (\delta/K) \mathbf{1}_K$ combines the hard ground truth label $\bm{y}$ with a uniform distribution over the labels. 
Here $\mathbf{1}_K$ denotes the K-dimensional vector of all ones. 
The hyper-parameter $\delta$ determines the degree of smoothing.

For simplicity, we use the following formulation to represent cross-entropy loss with and without label smoothing: 
\begin{equation} \label{eq:ls}
    l_{CE}(\bm{y}^{\delta}, \bm{p}) = 
    - \sum_{k} y_k^{\delta} \log(p_k), 
\end{equation}
where $y^{\delta}_k = (1-\delta)y_k + \delta/K$. The provided loss corresponds to cross-entropy loss with one-hot labels when $\delta=0$, and for any other value of $\delta \in (0, 1)$, it represents the loss with label smoothing.


\section{Empirical Analysis of Cross-Entropy and Label Smoothing Losses} \label{sec:emperical}

This section conducts a comprehensive empirical comparison between cross-entropy loss and label smoothing from the perspective of neural collapse. 
In Section \ref{sec:converge},  we examine the convergence of the model to neural collapse solutions under cross-entropy loss with and without label smoothing.  Our results demonstrate that while models under both loss functions converge to neural collapse, label smoothing induces faster convergence and reaches a more pronounced level of neural collapse. Beyond convergence rates, Section \ref{sec:nc1nc2} delves into the dynamics of convergence, revealing that label smoothing introduces a bias toward solutions with a symmetric simplex ETF structure, thereby enforcing NC2. In Section \ref{sec:calibration}, we attempt to understand the impact of label smoothing on model calibration from the perspective of neural collapse. In addition, we also investigate how the smoothing hyperparameter $\delta$ influences neural collapses during the model training in the appendix \ref{sec:delta}. \looseness=-1

\textbf{Experiment Setup}. 
We conducted experiments on CIFAR-10 \citep{krizhevsky2009learning}, CIFAR-100, STL-10 \citep{coates2011analysis}, and Tiny ImageNet \citep{deng2009imagenet}. Tiny ImageNet is a subset of ImageNet, containing 100,000 samples across 200 classes. To ensure consistency with prior studies \citep{papyan2020prevalence, zhu2021geometric}, we use ResNet-18 \citep{he2016deep} as the backbone for CIFAR-10 and ResNet-50 \citep{he2016deep} for CIFAR-100, STL-10, and Tiny ImageNet.

To isolate the effects of neural collapse and minimize external influences, we apply standard preprocessing without data augmentation. To comprehensively analyze model behavior during TPT, we extend the training period to 800 epochs for CIFAR-10, CIFAR-100, and STL-10, and 300 epochs for Tiny ImageNet. For all datasets, we use a batch size of 128 and train with stochastic gradient descent (SGD) with a momentum of 0.9. The learning rate is initialized at 0.05 and follows a multi-step decay, decreasing by a factor of 0.1 at epochs 100 and 200 for Tiny ImageNet and at epochs 150 and 350 for the other datasets.
We use a default weight decay of $5 \times 10^{-4}$, except for the experiments in Section \ref{sec:calibration}, where a weight decay value of $1 \times 10^{-4}$ is used. \looseness=-1

\textbf{Metrics for Measuring NC}. We assess neural collapse in the last-layer features and the classifiers using metrics based on the properties introduced in Section \ref{intro}, with metrics similar to those presented in \citep{papyan2020prevalence}. For convenience, we denote the global mean and class mean of the last-layer features as:
\[\bm{h}_G = \frac{1}{N} \sum_{k=1}^K \sum_{i=1}^n \bm{h}_{ki}, 
\quad  
\bar{\bm{h}}_{k} = \frac{1}{n} \sum_{i=1}^n \bm{h}_{ki}, (1 \leq k \leq K). 
\] 
\textbf{Within class variability (NC1)} measures the relative magnitude of the within-class covariance matrix
$ \Sigma_{W} := \frac{1}{N} \sum_{k=1}^K \sum_{i=1}^n (\bm{h}_{ki} - \bar{\bm{h}}_{k} ) (\bm{h}_{ki} - \bar{\bm{h}}_{k} )^{\top} $  compared to the between-class covariance matrix 
$\Sigma_{B} := \frac{1}{K} \sum_{k=1}^K ( \bar{\bm{h}}_{k} - \bm{h}_G ) (\bar{\bm{h}}_{k} - \bm{h}_G)^{\top}$  of the last-layer features. It is formulated as:  \looseness=-1
\begin{equation}
    NC_{1} =\frac{1}{K} trace \left( \Sigma_W \Sigma_B^{\dagger} \right), 
\end{equation}
where $\Sigma_B^{\dagger}$ denotes the pseudo inverse of $\Sigma_B$.

\textbf{Distance to simplex ETF (NC2)}
quantifies the difference between the product of the classifier weight matrix and the centered class mean feature matrix, and a simplex ETF, defined as follows\footnote{
When the bias $\bm{b}$ is an all-zero vector or a constant vector, $NC_2 =0$ implies that the average logit matrix, defined as $\widebar{\bm{Z}} ={\bm{W}}^\top \widebar{\bm{H}} + \bm{b} \mathbf{1}_K^T$, satisfies $ \widebar{\bm{Z}} = a \left(  \bm{I} - \frac{1}{K} \mathbf{1}_K \mathbf{1}_K^\top \right)$ for some constant $a$, i.e.,  $\widebar{\bm{Z}}$ is a simplex ETF.
}:
\begin{equation} \label{eq:nc2}
NC_2
:= \left\|  
    \frac{ {\bm{W}}^\top \widebar{\bm{H}} } { \left\| {\bm{W}}^\top \widebar{\bm{H}} \right\|_{F}} - 
                     \frac{1}{\sqrt{K-1}} \left( \bm{I}_K - \frac{1}{K} \mathbf{1}_K \mathbf{1}_K^\top \right)
    \right\|_F, 
\end{equation} 
where $\widebar{\bm{H}}= [\bar{\bm{h}}_1-\bm{h}_G, \cdots, \bar{\bm{h}}_K-\bm{h}_G] \in \mathbb{R}^{d \times K}$ represents centered class mean matrix.



\textbf{Self-duality (NC3)} measures the distance between the classifier weight matrix $\bm{W}$ and the centered class-means $\widebar{\bm{H}}$: \looseness=-1
\begin{equation} \label{eq:nc3}
NC_3:= \left\|  
    \frac{ {\bm{W}} } { \left\| {\bm{W}} \right\|_{F}} - 
    \frac{ {\widebar{\bm{H}}} } { \left\| \widebar{{\bm{H}}} \right\|_{F}}
    \right\|_F. 
\end{equation}
It is evident that when $NC_2$ in (\ref{eq:nc2}) and $NC_3$ in (\ref{eq:nc3}) both reach zero, the matrices $\bm{W}$ and $\bm{H}$ form the same simplex ETF up to some scaling factor. Thus, our definitions of NC1-NC3 capture the same concepts as the definitions in \citep{papyan2020prevalence} and \citep{zhu2021geometric}. We say that neural collapse occurs if NC1, NC2, and NC3 collectively approach zero during the Terminal Phase of Training (TPT).


\subsection{Terminal Phase Training under Label Smoothing and Cross-Entropy Loss} \label{sec:converge}

In this section, we investigate the distinct behaviors exhibited by models trained under cross-entropy loss with and without label smoothing during TPT.
Our experiments are conducted on the CIFAR-10, CIFAR-100, STL-10, and Tiny ImageNet datasets, using a default label smoothing hyperparameter of $\delta=0.05$. The training dynamics are visualized in Figure \ref{fig:converge}.
 \looseness=-1


\begin{figure*}[h]
\begin{center}
\includegraphics[width=0.9\linewidth]{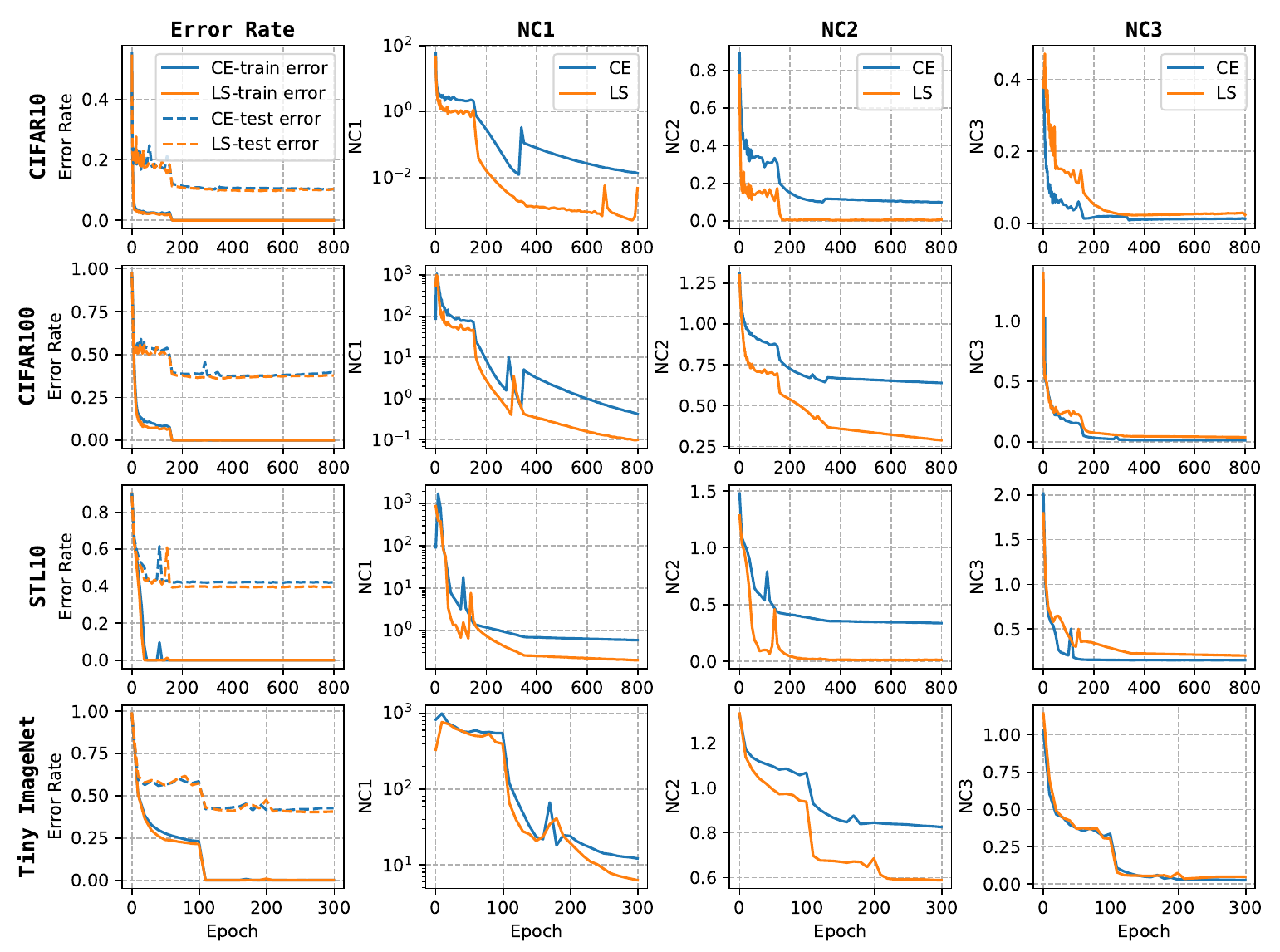}
\end{center}
\caption{{Neural collapse with cross-entropy loss (\textcolor{C0}{CE}) and label smoothing (\textcolor{orange}{LS})}. Columns from left to right represent the model's error rate (Train/Test), NC1, NC2, and NC3.\looseness=-1}
\label{fig:converge}
\end{figure*}

The leftmost column in Figure \ref{fig:converge} illustrates the progression of training and testing errors throughout the training process. 
As expected, models trained under label smoothing exhibit lower testing errors compared to those under cross-entropy loss for all datasets, showcasing an improvement in model generalizability. Furthermore, models trained with label smoothing exhibit faster convergence for training and testing errors. 
To facilitate a clearer comparison, Table \ref{tab:errors} reports training and test errors at different epochs. Since the network’s final layer behaves as a nearest centroid classifier (NCC) at the neural collapse solution, we also include the corresponding training and test errors for the NCC. These results further confirm that models trained with label smoothing exhibit a faster reduction in error rates compared to those trained with cross-entropy loss.
Figure \ref{fig:converge} additionally presents the three neural collapse metrics for models trained under cross-entropy loss and label smoothing. While the values of NC3, representing the alignment of $\bm{W}$ and $\bm{H}$, remain consistently low and comparable under both loss functions, models with label smoothing exhibit faster convergence in both NC1 and NC2 and eventually reach lower levels for both NC1 and NC2.  In Section \ref{sec:train_loss} of the appendix, we also compare the training loss of models trained with cross-entropy loss and label smoothing, demonstrating that label smoothing also accelerates convergence in terms of training error.
We provide a theoretical explanation for these empirical findings in Section \ref{sec:proof}.



\begin{table}[h]
    \centering

    \small 
    \resizebox{0.85\textwidth}{!}{ %

    \begin{tabular}{c|cc|cc|cc|cc}
        \toprule
        \textbf{Epoch} & \multicolumn{2}{c|}{\textbf{Train Error (\%)}} & \multicolumn{2}{c|}{\textbf{Test Error (\%)}} & \multicolumn{2}{c|}{\textbf{NCC Train Error (\%)}} & \multicolumn{2}{c}{\textbf{NCC Test Error (\%)}} \\
        & CE & LS & CE & LS & CE & LS & CE & LS \\
        \midrule
        \multicolumn{9}{c}{\textbf{CIFAR10}} \\
        \midrule
        10  & 4.84 & 3.41 & 21.23 & 21.60 & 5.51 & 3.53 & 19.67 & 19.28 \\
        20  & 3.59 & 3.15 & 19.67 & 19.34 & 3.06 & 2.49 & 18.05 & 17.26 \\
        50  & 2.67 & 2.28 & 20.68 & 17.52 & 3.75 & 2.39 & 18.79 & 16.86 \\
        100 & 2.27 & 2.07 & 17.49 & 17.26 & 2.10 & 2.47 & 16.90 & 16.60 \\
        \midrule
        \multicolumn{9}{c}{\textbf{CIFAR100}} \\
        \midrule
        10  & 41.29 & 37.37 & 53.33 & 54.94 & 38.84 & 33.63 & 53.51 & 52.13 \\
        20  & 18.23 & 14.67 & 52.46 & 50.94 & 20.88 & 11.68 & 51.57 & 47.76 \\
        50  & 11.35 & 8.28 & 53.73 & 55.02 & 11.07 & 9.02 & 50.24 & 49.09 \\
        100 & 8.63 & 7.49 & 54.21 & 54.48 & 7.27 & 8.26 & 48.87 & 48.65 \\
        \midrule
        \multicolumn{9}{c}{\textbf{STL10}} \\
        \midrule
        10  & 65.74 & 62.41 & 67.38 & 65.96 & 67.66 & 64.92 & 68.01 & 65.86 \\
        20  & 55.88 & 47.37 & 58.18 & 53.97 & 64.22 & 52.20 & 66.46 & 56.34 \\
        50  & 0.08 & 0.02 & 52.83 & 40.96 & 12.16 & 0.4 & 50.21 & 41.81 \\
        100 & 0.06 & 0.00 & 45.15 & 42.11 & 0.02 & 0.06 & 44.24 & 40.76 \\
        \midrule
        \multicolumn{9}{c}{\textbf{Tiny ImageNet}} \\
        \midrule
        10  & 51.04 & 50.67 & 59.64 & 60.74 & 55.84 & 61.76 & 62.55 & 66.95 \\
        20  & 38.30 & 35.59 & 56.47 & 56.79 & 42.17 & 39.85 & 55.86 & 54.55 \\
        50  & 27.75 & 23.14 & 56.63 & 56.18 & 31.21 & 25.54 & 54.21 & 52.36 \\
        100 & 23.14 & 21.45 & 57.51 & 57.42 & 25.03 & 19.94 & 53.48 & 51.54 \\
        \bottomrule
    \end{tabular}
    }
    \caption{Training and test error rates based on the final classification layer (left four columns) and nearest centroid decision rule (right four columns) for models trained with cross-entropy (CE) and label smoothing (LS).\looseness=-1}
    \label{tab:errors}
\end{table}

\subsection{Label Smoothing Induces Enhanced NC2} \label{sec:nc1nc2}

\begin{figure*}[h]
\begin{center}
\includegraphics[width=1.01\linewidth]{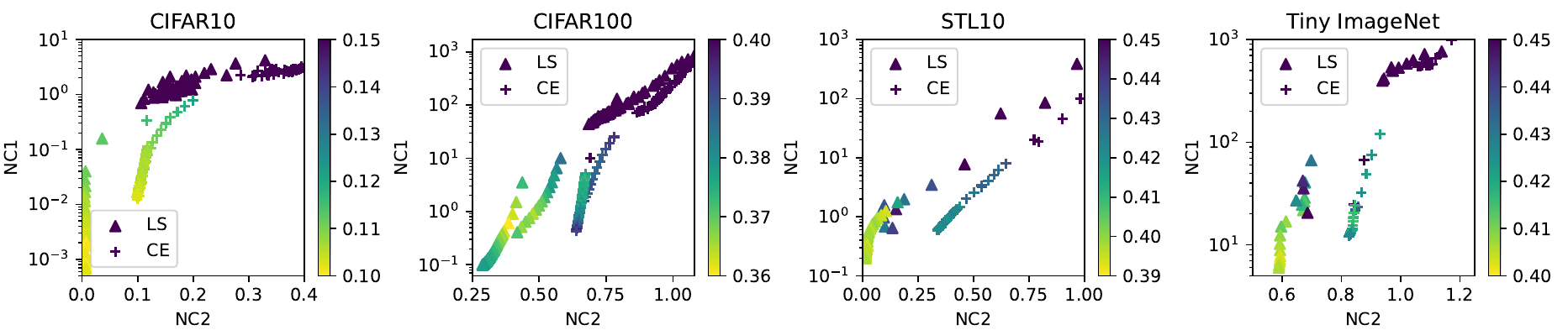}
\end{center}
\caption{Scatter plots of NC1 vs. NC2 under cross-entropy loss and label smoothing, with colors indicating the test error rate.\looseness=-1
}
\label{fig:nc1nc2}
\end{figure*} 

Along with the faster convergence to neural collapse under label smoothing, we further observe that models trained with label smoothing maintain a distinct balance between NC1 and NC2 throughout training. Specifically, we find that for the same level of NC1, label smoothing consistently provides a more pronounced manifestation of NC2 compared to cross-entropy Loss. 
Figure \ref{fig:nc1nc2} provides scatter plots of NC1 and NC2 under both loss functions, with the NC metrics recorded every $10$ training epochs. Across all four datasets, the data points under label smoothing consistently position to the left of those for cross-entropy loss. This observation suggests that at equivalent levels of NC1, label smoothing induces an intensified level of NC2. \looseness=-1

To gain insight into this phenomenon, we now closely examine the formulations of cross-entropy loss and label smoothing. Given an input $\bm{x}$, the output logit and the predicted distribution are equal to $\bm{z}=f_{\Theta}(\bm{x})$  and $\bm{p}=\text{Softmax}(\bm{z})$, respectively. As per Equation \ref{eq:ls}, assuming the observation $\bm{x}$ is from the $k$-th class, cross-entropy loss is formulated as $l_{CE} = - \log(p_k)$. To minimize cross-entropy loss, the emphasis is solely on making the logit of the target class larger than the logits of non-target classes {\em without} constraining the logit variation among the non-target classes.

On the other hand, label smoothing loss with smoothing hyperparameter $\delta$ is given by
\begin{equation}
    l_{LS}=-  \left( \left(1- \frac{K-1}{K}\delta \right)\log(p_k) + \frac{\delta}{K} \sum_{l\neq k} \log (p_l) \right).
\end{equation}
According to Jensen's inequality, we have \looseness=-1

\begin{equation}
    \sum_{l\neq k} \log (p_l) = (K-1) \frac{\sum_{l\neq k} \log (p_l)}{K-1} \leq 
(K-1) \log \left( \frac{\sum_{l\neq k} p_{l}}{K-1} \right) = 
(K-1) \log \left( \frac{1-p_k }{K-1}\right), 
\end{equation}

with equality achieved only if $p_l = p_{l'}$ (for $l, l' \neq k$). This implies that label smoothing loss reaches its minimum only if the predicted probabilities for non-target classes are all equal. \textit{Thus, label smoothing strengthens the inductive bias towards equalizing the logits of the non-target classes.} 
This property aligns with the definition of $NC_2$ in (\ref{eq:nc2}), explaining why label smoothing loss reinforces NC2.   As demonstrated in Section \ref{sec:nc_metrics_a} in the appendix, for deep neural networks with L2 regularization, the convergence of $NC_2$ as defined in (\ref{eq:nc2}) to zero indicates that both the classification weight $\bm{W}$ and class mean features $\widebar{\bm{H}}$ converge to a simplex ETF structure. This ETF structure promotes maximally separable features and classifiers, which are inherently more robust to outliers and can theoretically enhance generalization.
However, while NC2 is believed to positively impact generalization, the effect of other factors, such as NC1, complicates this relationship. Empirically, we observe slight improvements in generalization for models trained with label smoothing on CIFAR-10 and CIFAR-100, and more significant improvements on STL-10. Still, due to the interplay between NC1 and NC2, we cannot conclusively establish a direct causal link between NC2 and generalization performance. Further theoretical and empirical work on how these factors influence generalization remains a promising area for future research.\looseness=-1


\subsection{Label Smoothing and Model Calibration}  \label{sec:calibration}

\begin{wrapfigure}{R}{0.55\textwidth}
    \centering
    \includegraphics[width=0.52\textwidth]{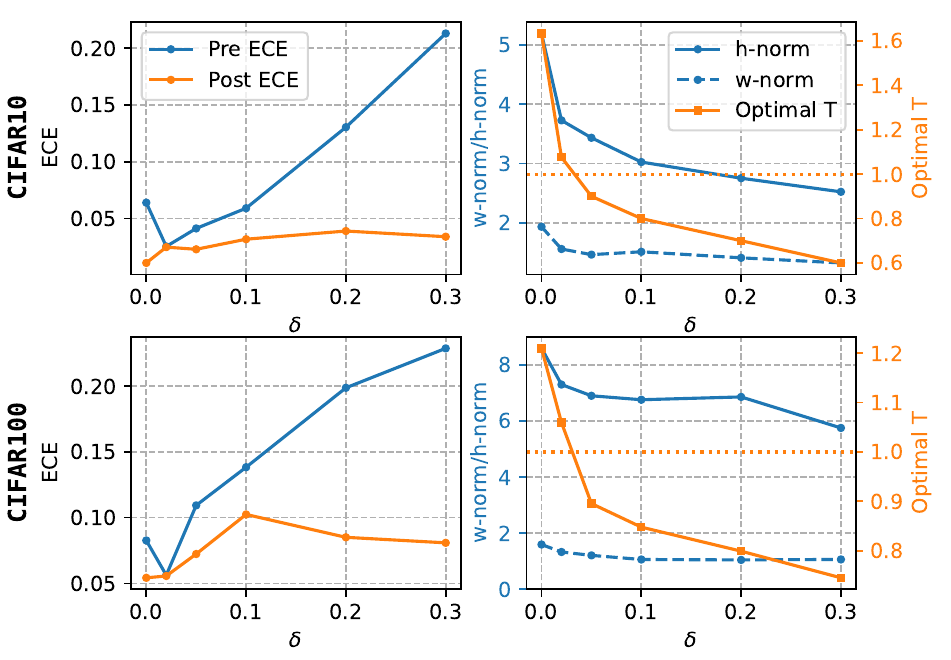}
    \caption{Plots of ECE \textcolor{C0}{Pre}/\textcolor{orange}{Post} temperature scaling (left column), alongside feature and classification vector norms and optimal T (right column) for CIFAR10 and CIFAR100 datasets with varying values of $\delta$.}
    \label{fig:ece_eps}
\end{wrapfigure}

DNN models often suffer from poor calibration, where the assigned probability values to class labels tend to overestimate the actual likelihood of correctness. This issue arises from the high capacity of DNN models, making them prone to overfitting the training data. 
Guo et al. \citep{guo2017calibration} introduced the expected calibration error (ECE) as a metric for assessing model calibration. Additionally, they proposed temperature scaling as an effective post-processing technique for improving calibration, which involves dividing a network's logits by a scalar $T>0$ before applying softmax, thereby softening (for $T>1$) or sharpening (for $T<1$) the predicted probability to make it more aligned with the true confidence levels. \looseness=-1

Previous studies \citep{muller2019does} have highlighted the effectiveness of label smoothing in improving model calibration. Conversely, our empirical study reveals intriguing findings: without temperature scaling, label smoothing (with properly tuned smoothing parameter $\delta$) enhances model calibration compared to cross-entropy loss, \textit{but when temperature scaling is applied as post-processing, label smoothing leads to inferior model calibration.} This inconsistency motivates us to investigate the underlying reason. In this section, we provide an explanation from the perspective of neural collapse. \looseness=-1

\textbf{Regularization effect of label smoothing on weight and feature norms}. We trained ResNet18 on CIFAR10 and ResNet50 on CIFAR100 using label smoothing with various smoothing hyperparameters ranging from $0$ to $0.3$. Then, we evaluated models' testing ECE both before and after temperature scaling, as depicted in Figure \ref{fig:ece_eps}. Our observations reveal that compared to cross-entropy loss ($\delta=0$), label smoothing with properly chosen $\delta$ lead to lower model calibration error (ECE), while larger values of $\delta$ may degrade model calibration.
This is because tuning the smoothing hyperparameter $\delta$ has a similar effect to tuning temperature $T$ in temperature scaling. Particularly, as shown empirically in the right column Figure \ref{fig:ece_eps} and theoretically in Theorem \ref{th1}, increasing $\delta$ decreases both the feature and classification vector norms \footnote{The feature norm is computed as the average norm of each class mean feature, represented by $ \sum_{k=1}^K \| \bar{\bm{h}}_k -\bm{h}_G \|/K$.  The classification vector norm is determined as the mean norm of the weight vector for each class given by $\sum_{k=1}^K \| \bm{w}_k \|/K$, where $\bm{w}_k\in \mathbb{R}^d$ represents the $k$-th column of the classifier weight $\bm{W}$.}, 
thereby reducing the confidence level of model predictions. Consequently, the corresponding optimal temperature $T$ decreases alongside the increase in $\delta$. In fact, there exists a smoothing hyperparameter $\delta^\ast$ for which the optimal $T$ equals 1, making temperature scaling post-processing unnecessary. Selecting $\delta>\delta^\ast$ results in an excessively low confidence level in the model's predictions, leading to a notable increase in test ECE. \looseness=-1

\looseness=-1

However, if we apply temperature scaling to counteract the regularization effect of label smoothing on feature and classification weight, models trained with label smoothing demonstrate higher test ECE (\textcolor{orange}{Post ECE} as shown in the left column of Figure \ref{fig:ece_eps}) compared to models trained with cross-entropy loss. 
We further investigate how this phenomenon is attributed to the more pronounced level of NC1 under label smoothing loss.

\textbf{Excessive NC1 adversely impacts model calibration}. 
To investigate the impact of NC1 on model calibration, we analyze how miscalibration occurs during training, using CIFAR10 data with cross-entropy loss and label smoothing ($\delta=0.05$) as examples.  We specifically partition the test set into two subgroups: those correctly classified and those incorrectly classified by the model. In Figure \ref{fig:ece_nc}, the left column illustrates the average cross-entropy loss and the average entropy of the model predictions for both subgroups of test samples. The middle section of the figure showcases the NC1 metric for the entire test dataset, as well as for each of the subgroups individually. Meanwhile, the right column of Figure \ref{fig:ece_nc} presents the training and testing misclassification errors, accompanied by the test set ECE. \looseness=-1

\begin{figure*}[h]
\begin{center}
\includegraphics[width=0.75\linewidth]{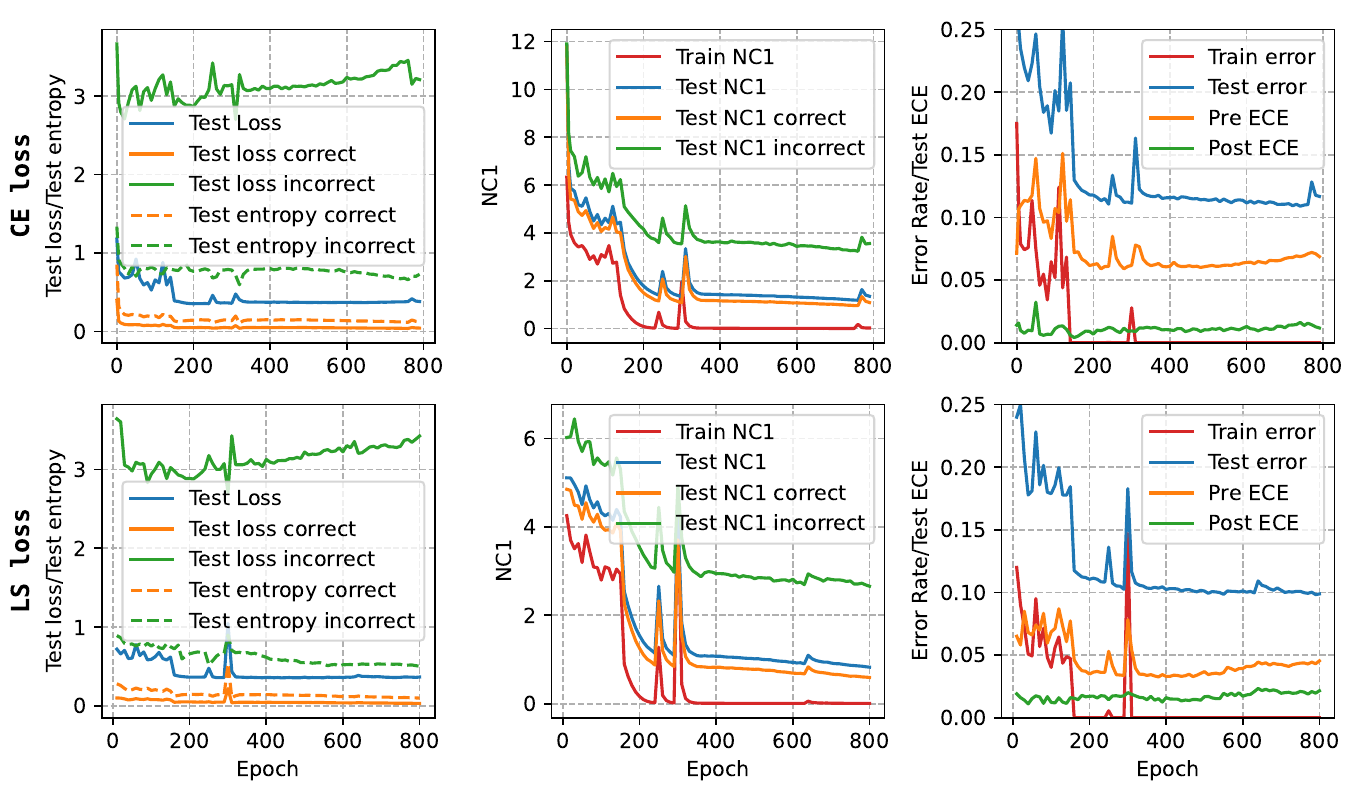}
\end{center}
\caption{Leftmost plot: average error and entropy for correctly and incorrectly classified test samples. Middle plot: NC1 for the training set, testing set, and correctly/incorrectly classified testing samples. Right plot: model classification error rate and the test ECE before and after temperature scaling.\looseness=-1
}
\label{fig:ece_nc}
\end{figure*}

In Figure \ref{fig:ece_nc}, models trained with cross-entropy loss and label smoothing show similar trends. For correctly classified test samples, both test loss and average entropy exhibit a consistent decreasing trend, indicating the model's increasing confidence in its correct predictions. Conversely, for incorrectly classified test samples, although the average loss starts to increase after 350 epochs, the overall downward trend of entropy suggests the model's growing confidence in its incorrect predictions. 
Moreover, the middle plot in Figure \ref{fig:ece_nc} indicates that NC1 for both correctly and incorrectly classified test samples consistently decreases during training.  For misclassified samples, a smaller NC1 indicates that their feature vectors align more closely with the mean of the incorrectly predicted class, making the model more confident about its incorrect predictions. \footnote{For misclassified test samples, we assess the within-class variance relative to the incorrectly predicted class.} 
As a result, the entropy for misclassified test samples continues to decrease, and the test ECE starts to rise during TPT. And if we apply temperature scaling to counteract the regularization effect of label smoothing on feature and classification vector norms, label smoothing loss results in higher ECE due to its stronger NC1.


\section{Theoretical Analysis}
\label{sec:proof}

 \looseness=-1

\subsection{Unconstrained Feature Model} \label{sec:UFM}

Analyzing deep neural networks poses significant challenges because of the non-linearities and complex interactions between layers in the feature mapping $\bm{h}:= \phi_{\bm{\theta}}(\bm{x})$. The Unconstrained Feature Model (UFM) \citep{fang2021exploring} simplifies DNN models by treating the features of the last layer as free optimization variables. This choice is motivated by the idea that over-parameterized DNN models are able to approximate any continuous functions \citep{hornik1991approximation,cybenko1989approximation,lu2017expressive, shaham2018provable}. 

Recall that the dimension of a feature vector $\bm{h}$ is denoted by $d$.
Let $\bm{H} =\{\bm{h}_{ki}\}_{1 \leq k \leq K, 1\leq i \leq n } \in \mathbb{R}^{d \times N}$ be the feature matrix of all training samples with $\bm{h}_{ki}$ denoting the $\left((k-1)n + i\right)$-th column of $\bm{H}$. Under the UFM, we investigate regularized empirical risk minimization, a variant of the formulation in Equation \ref{eq:l}: \looseness=-1
\begin{equation} \label{eq:l1}
    \min_{ \bm{W}, \bm{H}, \bm{b} }  
     \frac{1}{N} \sum_{k=1}^K \sum_{i=1}^n {l_{CE}}( \bm{W}^{\top} \bm{h}_{ki} + \bm{b}, \bm{y}_{k}^{\delta}) 
    + \frac{\lambda_{W}}{2} \|\bm{W}\|^2_{F} +  \frac{\lambda_{H}}{2} \|\bm{H}\|^2_{F} + \frac{\lambda_{b}}{2} \|\bm{b}\|^2, 
\end{equation}
where $\bm{y}_{k}^{\delta}=(1-\delta)\bm{e}_{k} +({\delta}/{K}){\mathbf{1}_{K}}$ 
is the soft label for class $k$ with a smoothing parameter $\delta$. Here $\bm{e}_{k}$ a one-hot vector with the $k$-th element equal to 1, and  $\lambda_{W}, \lambda_{H}, \lambda_{b} >0$ are the regularization parameters. \looseness=-1

Let $\bm{Y}= [\bm{e}_1 \mathbf{1}_n^{\top}, \cdots, \bm{e}_K \mathbf{1}_n^{\top}] \in \mathbb{R}^{K \times N}$ represent the matrix form of the (hard) ground truth labels. Consequently, the matrix form of the soft labels can be represented as $\bm{Y}^{\delta} = (1-\delta)\bm{Y} + \frac{\delta}{K} \mathbf{1}_K \mathbf{1}_N^\top$. The empirical risk minimization in (\ref{eq:l1}) has an equivalent matrix form:
\begin{equation} \label{eq:l2}
    \min_{\bm{W}, \bm{H}, \bm{b}} \mathcal{L}(\bm{W},  \bm{H}, \bm{b})  :=
    \frac{1}{N} {l_{CE}}( \bm{W}^{\top}\bm{H} + \bm{b}\mathbf{1}_N^\top, \bm{Y}^{\delta}) + 
     \frac{\lambda_{W}}{2} \|\bm{W}\|^2_{F} +  \frac{\lambda_{H}}{2} \|\bm{H}\|^2_{F} + \frac{\lambda_{b}}{2} \|\bm{b}\|^2, 
\end{equation}
where ${l_{CE}}( \bm{W}^{\top} \bm{H} + \bm{b} \mathbf{1}_N^\top, \bm{Y}^{\delta})$ computes the cross-entropy column-wise and takes the sum. 

\subsection{Theoretical Results} 
\label{sec:th}

Within the UFM framework, Zhou et al. \citep{zhou2022all} demonstrate that,  under a wide range of loss functions, including cross-entropy loss and label smoothing, all the global minimizers satisfy neural collapses properties, and in particular $\bm{H}$ and $\bm{W}$ form aligned simplex ETFs. Moreover, they show
that every critical point is either a global minimizer or a strict saddle point,
which implies that these global minimizers can be effectively attained through iterative algorithms. 
However, their work does not provide the exact expression of the global minimizers, nor does it provide a landscape analysis (condition number) across different loss functions, which is critical for understanding the model's convergence behavior. 
In contrast, our work derives the exact solutions of both $\bm{W}$ and $\bm{H}$ for UFM models with cross-entropy loss and label smoothing. 
Furthermore, based on these solutions, we employ conditioning number analysis to closely compare the optimization landscape of the models surrounding their respective global minimizers, and thereby provide an explanation for the accelerated convergence observed under label smoothing loss. The proofs of our theorems can be found in the Appendix. 

Let $\lambda_Z := \sqrt{\lambda_W \lambda_H}$ and let $a^\delta$ be defined by
    \begin{itemize}
            \item [(i)] if $\sqrt{KN}\lambda_{Z} + \delta \geq 1$, then $a^{\delta} = 0$; 
            \item [(ii)] if $\sqrt{KN}\lambda_{Z} + \delta < 1$, then 
            \[ a^{\delta} = \log \left( \frac{K}{\sqrt{KN}\lambda_{Z} +\delta} - K+1 \right).
            \] 
        \end{itemize}

\begin{theorem}{(Global Optimizer).}\label{th1}
    Assume that the feature dimension $d$ is greater than or equal to the number of classes $K$, i.e., $d \ge K$, and the dataset is balanced.  Then any global optimizer of $(\bm{W}, \bm{H}, \bm{b})$ of (\ref{eq:l2}) satisfies the following condition:
    
    There is a semi-orthogonal matrix $\bm{P} \in \mathbb{R}^{d\times K}$ (i.e., $\bm{P}^\top \bm{P}=\bm{I}_K$) such that: \looseness=-1
    
    $(i)$ The classification weight matrix $\bm{W}$ is given by 
    \begin{equation} \label{eq:sol_w}
        \bm{W} = \left( \frac{n \lambda_{H}}{\lambda_W} \right)^{1/4} \sqrt{a^{\delta}} \bm{P} \left(  \bm{I}_K - \mathbf{1}_K \mathbf{1}_K^\top /K \right).  
    \end{equation}
    (ii) The matrix of last-layer features $\bm{H}$ is given by  \looseness=-1
    \begin{equation}  \label{eq:sol_h}
        \bm{H} = \left( \frac{\lambda_W}{n \lambda_H} \right)^{1/4} \sqrt{a^{\delta}} \bm{P} \left( \bm{I}_K - \mathbf{1}_K \mathbf{1}_K^\top /K \right) \bm{Y}.
    \end{equation}
    $(iii)$ The bias $\bm{b}$ is a zero vector, i.e. $\bm{b}=\bm{0}$. 
    
\end{theorem}

The above theorem provides an explicit closed-form solution for the global minimizer $(\bm{W}, \bm{H}, \bm{b})$. Notably, our findings indicate that an increase in $\delta$ is associated with a decrease in the norm of $\bm{W}$ and $\bm{H}$. This observation aligns closely with the empirical results detailed in Section \ref{sec:calibration}.  \looseness=-1

Our observations in Section \ref{sec:emperical} reveal that label smoothing loss demonstrates accelerated convergence in terms of the NC metrics and training error. To better understand this phenomenon, we analyze the optimization landscape under the UFM.  
Note that condition number of the Hessian matrix, representing the ratio of the largest to the smallest eigenvalue, plays a crucial role in convergence rate analysis \citep{nocedal1999numerical, trefethen2022numerical}.  In the vicinity of the local minimizer, a smaller condition number typically signifies a faster convergence rate. 
We now present our main theoretical result, which, to the best of our knowledge, is the first result providing insight into the convergence rate to the optimal solution under UFM models.
\looseness=-1

At the global optimizer characterized by Theorem \ref{th1}, the predicted probability score for class $k$ can be expressed as \footnote{This result follows from the fact that, at the global optimizer, the features of all samples in class $k$ converge to the class mean feature, resulting in identical probability scores as given in Equation \ref{p_k}. Furthermore, due to the simplex equiangular tight frame (ETF) structure of both the class mean features and classifier weight vectors at the global minimizer, we obtain a uniform probability distribution across the non-target classes.}  
\begin{equation} \label{p_k}
    \bar{\bm{p}}_k=(p_t - p_n) \bm{e}_k + p_n \mathbf{1}_K
\end{equation} 
where $p_t$ and $p_n$ represent the predicted probability for the target and non-target classes, at the global optimal solution. Specifically, these probabilities are defined as:
\begin{equation} \label{eq:p}
    p_t=e^{a^\delta }/ (K-1+e^{a^\delta }), \quad 
    p_n=1/ (K-1+e^{a^\delta }), 
\end{equation}
which implies that $p_t$ decreases as the smoothing parameter $\delta$ increases. With this notation, we state the following theorem. \looseness=-1

\begin{theorem} \label{th:optim} {(Optimization Landscape).}
Assume the feature dimension $d$ is greater than or equal to the number of classes $K$ ($d \ge K$) and the dataset is balanced. 
In addition, assume the regularization parameters and the smoothing parameter satisfy $\sqrt{KN\lambda_W \lambda_H} + \delta <1$. Then, at the global minimizer (as provided in Equation \ref{eq:sol_w}-\ref{eq:sol_h}), the condition number for the Hessian of the empirical loss with respect to ${\bm{W}}$ when ${\bm{H}}$ is fixed (also the Hessian w.r.t. ${\bm{H}}$ when ${\bm{W}}$ is fixed)  takes the form 
\begin{equation}
    \kappa(\nabla^{2}_{\bm{W}}\phi) = \kappa(\nabla^{2}_{\bm{H}}\phi) = K p_t
\end{equation}
Therefore, label smoothing with a larger smoothing parameter $\delta$ results in smaller values of $p_t$, which in turn leads to a smaller condition number.
\end{theorem}

This theorem suggests that the training with smoothed labels results in a better-conditioned optimization landscape in the vicinity of the global minimizers, facilitating faster convergence behavior. It is noteworthy that while the theorem primarily analyzes the local landscape around the global minimizer, experimental results demonstrate that gradient descent with random initialization indeed converges faster to the global solution of neural collapse under label smoothing loss as observed in Section \ref{sec:converge} and Section \ref{validate_th}. 

One possible intuition for why label smoothing improves the condition number lies in its regularization effect on the predicted probability distribution. Without label smoothing, cross-entropy loss drives the model toward overconfident predictions, assigning nearly all probability mass to the target class and pushing non-target probabilities close to zero. This creates sharp curvature in dimensions corresponding to the target class and flat regions in non-target dimensions, resulting in a poorly conditioned optimization landscape with high curvature variations. Label smoothing mitigates this issue by encouraging a more uniform probability distribution over non-target classes, preventing gradients from being dominated by the target class prediction. By smoothing the loss landscape and reducing curvature differences across dimensions, label smoothing stabilizes gradient updates and promotes more efficient convergence.

However, it is important to note that while Theorem 4.1 establishes a connection between the smoothing parameter $\delta$ and model convergence,  it does not directly address how to optimally choose $\delta$ to improve model generalization. This remains an open question for future research.
\looseness=-1


\section{Discussion} 
\label{sec:discussion}

We conducted a comprehensive empirical comparison of cross-entropy loss and label smoothing during training. Our results show that models trained with label smoothing exhibit accelerated convergence, both in terms of training error and neural collapse (NC) metrics. 
Furthermore, they converge to a more pronounced level of NC1 and NC2.   Along with the accelerated convergence, we found that label smoothing maintains a distinct balance between NC1 and NC2. We posit that the emphasis on NC2 in label smoothing enhances the model's generalization performance. Additionally, we investigated the impact of label smoothing on model calibration from the perspective of neural collapse, revealing that label smoothing has a regularization effect on the classifier weight and feature norms, and excessively small NC1 values may adversely affect model calibration.  \looseness=-1

We performed a mathematical analysis of the convergence properties of the UFM models under CE and LS losses. We first derived closed-form solutions for the global minimizers under both loss functions.  Then we conducted a second-order theoretical analysis of the optimization landscape around their respective global optimizers, which reveals that LS exhibits a better-conditioned optimization landscape around the global minimum, which facilitates the faster convergence observed in our empirical study.
\looseness=-1

This paper provides a significantly deeper understanding of why LS excels in terms of convergence, performance, and model calibration compared to CE loss. Additionally, it illustrates how the powerful framework of neural collapse and its associated mathematical models can be employed to gain a more nuanced understanding of the "why" of DNNs. We expect that these results will inspire future research into the interplay between neural collapse, convergence speed, and model generalizability.

\section*{Acknowledgment}
    This work is submitted in part by the NYU Abu Dhabi Center for Artificial Intelligence and Robotics, funded by Tamkeen under the Research Institute Award CG010.
    
    This work is partially supported by Shanghai Frontiers Science Center of Artificial Intelligence and Deep Learning at NYU Shanghai. Experimental computation was supported in part through the NYU IT High-Performance Computing resources and services.

\bibliography{main}
\bibliographystyle{tmlr}

\appendix
\section{Appendix}

\section{Additional Experiments} 
This section provides supplementary visualizations to further support the conclusions of the main paper. In Section \ref{sec:nc_metrics_a}, we present a more detailed description of the neural collapse metrics, followed by an overview of the experimental setup in Section \ref{sec:a_exp}. Finally, we include additional visualizations to reinforce our key findings. \looseness=-1

\subsection{Metrics for Measuring NC} \label{sec:nc_metrics_a}

We introduced the metrics for measuring neural collapse in Section~\ref{sec:emperical}. For convenience, we restate them here, providing additional details about the subtle differences between our definitions and those used in prior works \citep{zhu2021geometric, zhou2022all}.  We denote the global mean and classwise means of the last-layer features as: 
\[\bm{h}_G = \frac{1}{N} \sum_{k=1}^K \sum_{i=1}^n \bm{h}_{ki}, 
\quad  
\bar{\bm{h}}_{k} = \frac{1}{n} \sum_{i=1}^n \bm{h}_{ki}, (1 \leq k \leq K), 
\] 
where $\bm{h}_G$ is the global mean and $\bar{\bm{h}}_{k}$ represents the mean of class $k$. The metrics for neural collapse are then defined as follows: 

\textbf{(NC1) Within class variability collapse} measures the relative magnitude of the within-class covariance 
$ \Sigma_{W} := \frac{1}{N} \sum_{k=1}^K \sum_{i=1}^n (\bm{h}_{ki} - \bar{\bm{h}}_{k} ) (\bm{h}_{ki} - \bar{\bm{h}}_{k} )^{\top} $  compared to the between-class covariance matrix 
$\Sigma_{B} := \frac{1}{K} \sum_{k=1}^K ( \bar{\bm{h}}_{k} - \bm{h}_G ) (\bar{\bm{h}}_{k} - \bm{h}_G)^{\top}$  of the last-layer features. It is formulated as:  \looseness=-1
\[
NC_{1} =\frac{1}{K} trace \left( \Sigma_W \Sigma_B^{\dagger} \right), 
\]
where $\Sigma_B^{\dagger}$ denotes the pseudo inverse of $\Sigma_B$.

\textbf{(NC2) Convergence to simplex ETF}
quantifies the difference  between the normalized classifier weight matrix and the centered class mean features in comparison to a normalized simplex ETF, defined as follows:
\begin{equation} \label{eq:nc2_a}
NC_2:= \left\|  
    \frac{ {\bm{W}}^\top \widebar{\bm{H}} } { \left\| {\bm{W}}^\top \widebar{\bm{H}} \right\|_{F}} - 
                     \frac{1}{\sqrt{K-1}} \left( \bm{I}_K - \frac{1}{K} \mathbf{1}_K \mathbf{1}_K^\top \right)
    \right\|_F, 
\end{equation}
where $\widebar{\bm{H}}= [\bar{\bm{h}}_1-\bm{h}_G, \cdots, \bar{\bm{h}}_K-\bm{h}_G] \in \mathbb{R}^{d \times K}$ represents centered class mean matrix.

This definition differs slightly from those in \citep{zhu2021geometric, zhou2022all}. However, it can be shown that the convergence of $NC2$ in Equation (\ref{eq:nc2_a}) to zero indicates the simultaneous convergence of both $\bm{W}$ and $\bm{H}$ towards the simplex ETF structure.  Specifically, when the bias $\bm{b}$ is an all-zero vector or a constant vector, $NC_2$ as defined in (\ref{eq:nc2_a}) approaching zero indicates that the average logit matrix, formulated as $\widebar{\bm{Z}} ={\bm{W}}^\top \widebar{\bm{H}} + \bm{b} \mathbf{1}_K^T$,  satisfies the condition $ \widebar{\bm{Z}} = a \left(  \bm{I} - \frac{1}{K} \mathbf{1}_K \mathbf{1}_K^\top \right) $ for some constant $a$. Remarkably, this matrix aligns with the simplex-encoding label (SEL) matrix introduced in \citep{thrampoulidis2022imbalance}, up to a scaling factor $a$.

From Proposition \ref{prop}, we have 
\[
\min_{\bm{W}^\top {\bm{H}}={\bm{Z}}}  \frac{1}{2} \left( \lambda_W \|\bm{W}\|^2_F + \lambda_H \|\bm{H}\|^2_F \right) 
\geq 
\| \widebar{\bm{Z}} \|_\ast
\]
where $\|\widebar{\bm{Z}}\|_\ast$ represents the nuclear norm of $\widebar{\bm{Z}}$ and the equality holds only when $\bm{H} =  \widebar{\bm{H}} \bm{Y}$ and $ \widebar{\bm{H}} = \sqrt{ {\lambda_W}/{n \lambda_H}} \bm{W} $, indicating the self-duality of the class mean feature and the classification vector. Consequently, during the model training with an L2 penalty on both $\bm{W}$ and $\bm{H}$ (with the norm of $\bm{H}$ implicitly penalized by penalizing the model parameters $\Theta$), the convergence of $NC_2$ as defined in (\ref{eq:nc2_a}) to zero indicates the simultaneous convergence of both $\bm{W}$ and $\bm{H}$ towards the simplex ETF structure.
\looseness=-1


\textbf{(NC3) Convergence to self-duality} measures the distance between the classifier weight matrix $\bm{W}$ and the centered class-means $\widebar{\bm{H}}$: 
\begin{equation} \label{nc3_a}
NC_3:= \left\|  
    \frac{ {\bm{W}} } { \left\| {\bm{W}} \right\|_{F}} - 
    \frac{ {\widebar{\bm{H}}} } { \left\| \widebar{{\bm{H}}} \right\|_{F}}
    \right\|_F. 
\end{equation}


\subsection{Experiment Setup} \label{sec:a_exp}

In this section, we provide more details for reproducing the experiments presented in the paper.  We emphasize that all datasets used—CIFAR-10, CIFAR-100, STL-10, and Tiny ImageNet—are publicly available for academic use. CIFAR-10 and CIFAR-100 are released under the MIT license. All experiments were conducted on a single RTX 3090 GPU with 24GB of memory.

For consistency with prior studies \citep{papyan2020prevalence, zhu2021geometric}, we use ResNet-18 \citep{he2016deep} as the backbone network for CIFAR-10 and ResNet-50 for CIFAR-100, STL-10, and Tiny ImageNet.
To isolate behaviors associated with neural collapse while minimizing the influence of other factors, we apply standard preprocessing techniques without any data augmentation during training. The training period is extended to 300 epochs for Tiny ImageNet and 800 epochs for all other datasets to analyze model behavior during the terminal phase of training. For all datasets, we use a batch size of 128, stochastic gradient descent with a momentum of 0.9, and an initial learning rate of 0.05, which follows a multi-step decay schedule—decreasing by a factor of 0.1 at epochs 100 and 200 for Tiny ImageNet, and at epochs 150 and 350 for the other three datasets. We considered two different scenarios for model regularization:
\begin{itemize}
    \item \textbf{Model with Weight Decay}. For the default setting, we trained the models with weight decay regularization set at $5 \times 10^{-4}$. It's worth noting that the norm of $\bm{H}$ is implicitly penalized by the weight decay on the model parameters $\Theta$. \looseness=-1
    \item \textbf{Simulated UFM Model}. To simulate the unconstrained feature model,  we turn off the weight decay and add an L2 penalty on the classifier parameters $\bm{W}$ and $\bm{b}$ and the last-layer feature $\bm{H}$ to the cross-entropy (CE) or label smoothing (LS) loss. This setting was only used in Section \ref{validate_th} to validate the theoretical results.
\end{itemize}

\subsection{Comparing Training Loss Under Cross-Entropy and Label Smoothing} \label{sec:train_loss}


From Theorem \ref{th:optim}, we establish that label smoothing yields a better-conditioned loss landscape, leading to faster convergence. However, due to the different formulations of cross-entropy and label smoothing losses, directly comparing their raw training losses is not meaningful. Instead, in the main paper, we use the training error rate as a proxy and demonstrate that models trained with label smoothing converge faster than those trained with standard cross-entropy loss.  

In this section, we present the training loss under both loss functions, as shown in the first row of Figure \ref{fig:train_loss}. Since their absolute values are not directly comparable, we also visualize the logarithm of the loss in the second row of Figure \ref{fig:train_loss}, defined as $\log(\text{loss} - \min(\text{loss}) + \eps)$, where $\min(\text{loss})$ represents the minimum training loss and $\eps=10^{-5}$ is a small constant added to avoid taking the logarithm of zero. As shown in Figure \ref{fig:train_loss}, models trained with label smoothing exhibit faster convergence in terms of log-scaled training loss.
For a clearer comparison, Table \ref{tab:log_loss} provides a quantitative comparison of the log-scaled loss. Notably, after 150 epochs (or 100 epochs for Tiny ImageNet), the training loss approaches its minimum value, and the log-scaled loss begins to fluctuate significantly.
Therefore, we focus our analysis on the loss behavior before this point, where the trend is more stable and informative.

\begin{figure}[h]
\begin{center}
\includegraphics[width=0.9\linewidth]{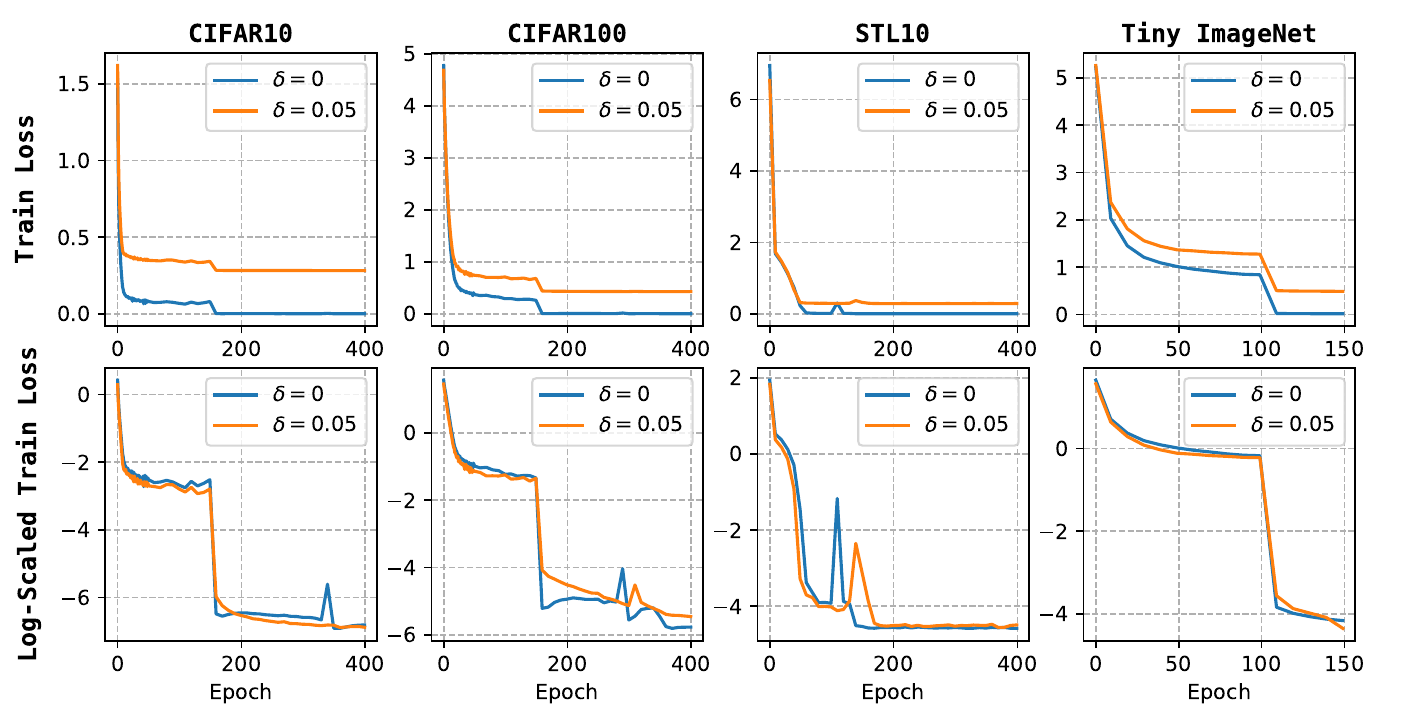}
\end{center}
\caption{Comparison of training loss (top row) and log-scaled training loss (bottom row) for models trained with cross-entropy loss (\textcolor{C0}{$\delta=0$}) and label smoothing (\textcolor{orange}{$\delta=0.05$}). \looseness=-1
}
\label{fig:train_loss}
\end{figure}

\begin{table}[h]
    \centering
    \begin{tabular}{lcccccccc}
        \toprule
        \multirow{2}{*}{Epoch} & \multicolumn{2}{c}{CIFAR-10} & \multicolumn{2}{c}{CIFAR-100} & \multicolumn{2}{c}{STL-10} & \multicolumn{2}{c}{Tiny ImageNet} \\
        \cmidrule(lr){2-3} \cmidrule(lr){4-5} \cmidrule(lr){6-7} \cmidrule(lr){8-9}
        & CE & LS & CE & LS & CE & LS & CE & LS \\
        \midrule
        10  & -1.84 & -2.12 & 0.48 & 0.33 & 0.52 & 0.39 & 0.71 & 0.64 \\
        
        20  & -2.25 & -2.42 & -0.46 & -0.60 & 0.38 & 0.17 & 0.37 & 0.28 \\
        
        50  & -2.46 & -2.62 & -0.94 & -1.14 & -1.45 & -3.28 & 0.02 & -0.15 \\
        
        100 & -2.67 & -2.81 & -1.22 & -1.31 & -3.92 & -4.02 & -0.17 & -0.23 \\
        \bottomrule
    \end{tabular}
    \caption{Log-scaled training loss at different epochs for models trained with Cross-Entropy (CE) and Label Smoothing (LS). \looseness=-1}
    \label{tab:log_loss}
\end{table}


\subsection{Impact of the Smoothing Hyperparameter } 
\label{sec:delta}

\begin{figure*}[h]
\begin{center}
\includegraphics[width=0.75\linewidth]{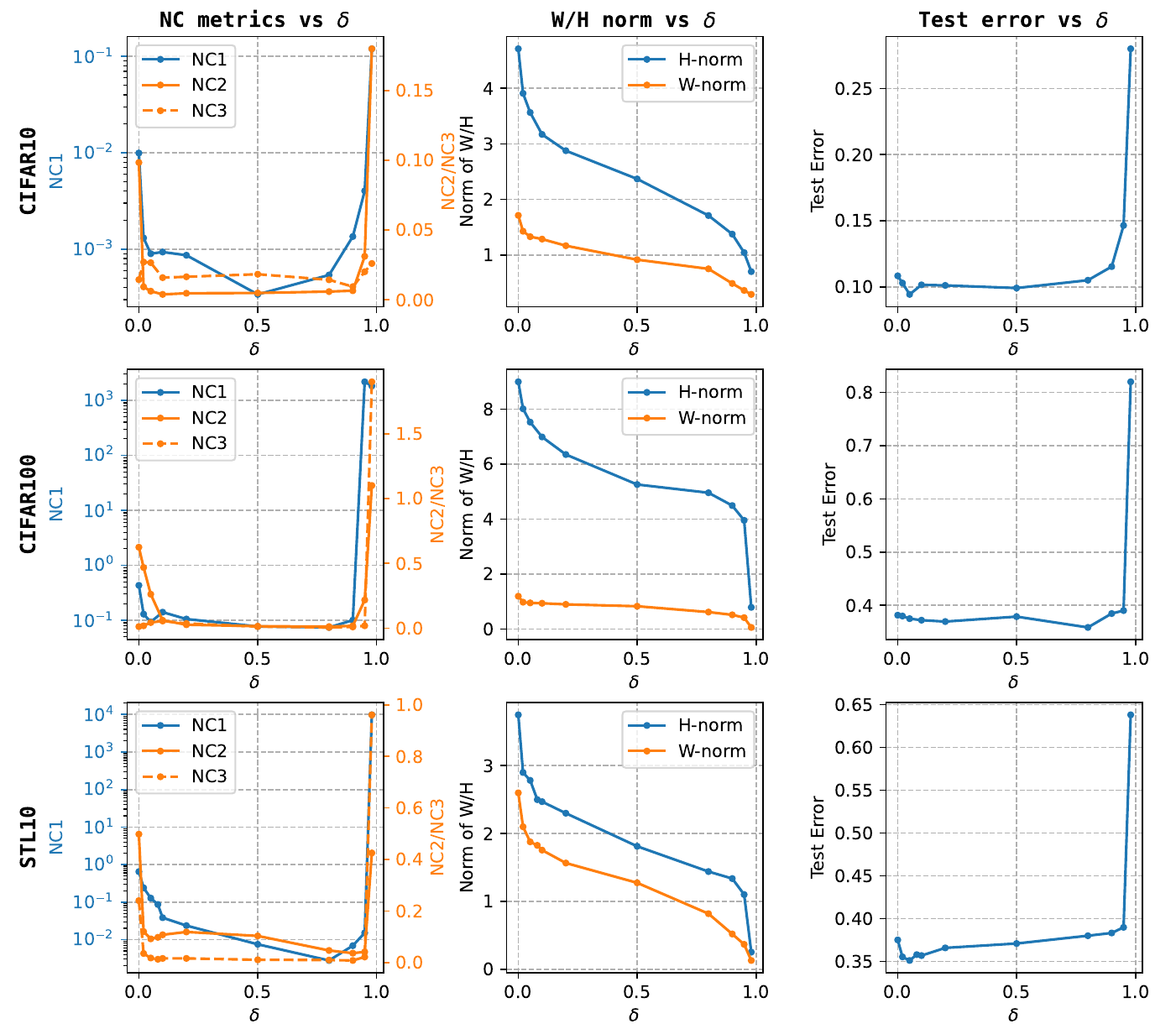}
\end{center}
\caption{ \textbf{The effect of the smoothing hyperparameter $\delta$}. The columns from left to right visualize (a) NC metrics vs. $\delta$, (b) Average norm of classification weight vectors and mean features vs. $\delta$, and (c) Testing performance (test error rate) vs. $\delta$. \looseness=-1
}
\label{fig:nc_eps}
\end{figure*}

In  Section \ref{sec:converge} and \ref{sec:nc1nc2}, a default smoothing hyperparameter of $\delta=0.05$ is utilized. This section explores how the choice of $\delta$ impacts the model's convergence to neural collapse and its generalizability. 
Specifically, we consider various values for $\delta$ within the interval $[0, 1)$. 
\looseness=-1

Figure \ref{fig:nc_eps} presents the results of the experiments. 
The experiments yield several important insights. Firstly, the smoothing hyperparameter significantly influences the model's convergence to neural collapse. The curves of NC1 and NC2 as a function of $\delta$ (2nd column in Figure \ref{fig:nc_eps}) reveal a U-shaped trend, with the levels of NC3 remaining relatively consistent across different values of $\delta$. Exceptionally small and large values of $\delta$ result in reduced collapse for both NC1 and NC2.
\looseness=-1

Secondly, the average norms of the classifier vectors and the class mean features decrease as $\delta$ increases. This observation aligns with intuition: given features and classifier vectors that form a simplex ETF structure, decreasing their norms softens the output probabilities. For LS loss, a higher smoothing parameter $\delta$ corresponds to smoother target labels, and consequently, the features and classification vectors with lower norms can achieve close-to-zero label smoothing loss. This observation was also supported by the closed-form expressions for $\bm{W}$ and $\bm{H}$ provided in Theorem \ref{th1}.
Thirdly, the trend of the test error also exhibits a U-shape, underscoring the necessity of selecting an appropriate $\delta$. When $\delta$ approaches $1$, the nearly uniform smoothed labels do not provide effective training signals to update the parameters. Additionally, the norm of both classification vectors and the last-layer features decrease towards zero, causing the classifier to be dominated by noise, leading to rapid deterioration in model performance.
Finally, we observe a surprising robustness in both neural collapse metrics and model performance for the choice of $\delta \in [0.02, 0.8]$. For classification tasks with a larger number of classes, such as CIFAR-100, opting for a relatively large $\delta$ is viable.

\subsection{Weight and Feature Regularization Improve Model Calibration} \label{sec:norm_ece_a}

\begin{figure*}[h]
\begin{center}
\includegraphics[width=0.5\linewidth]{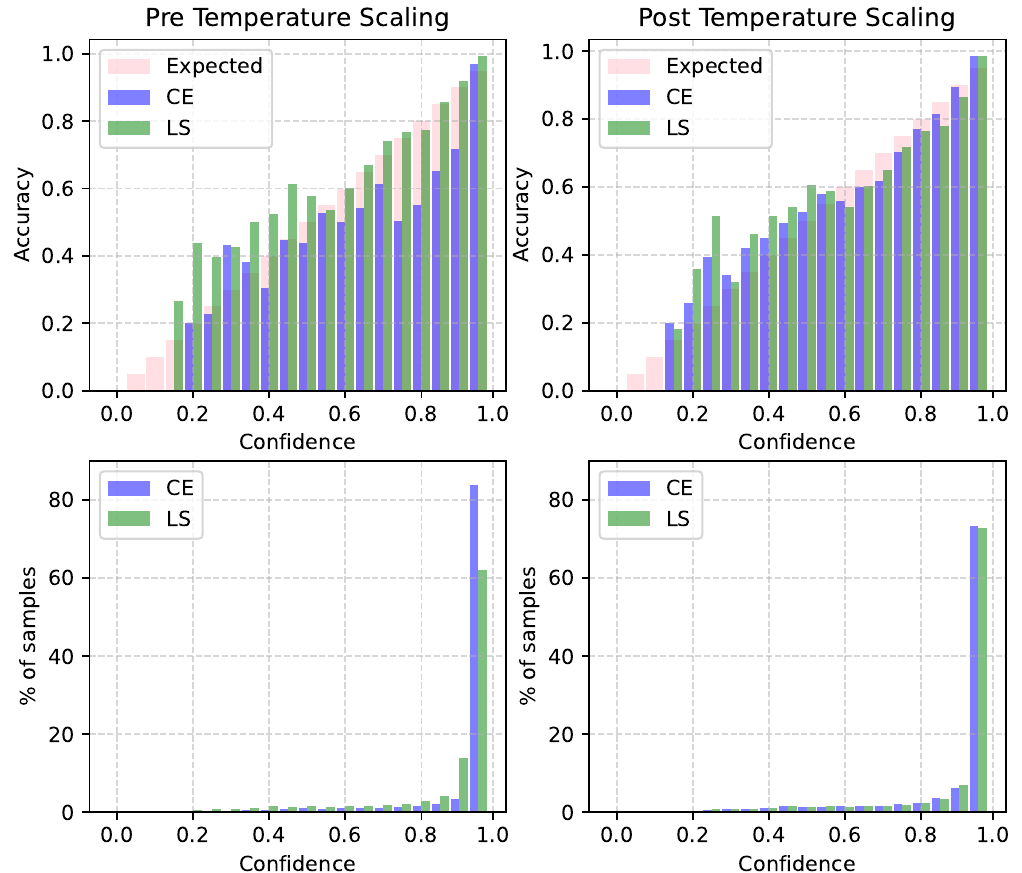}
\end{center}
\caption{Reliability plots with 20 bins (top row) and the percentage of samples in each bin (bottom row) for models trained with cross-entropy (CE) and label smoothing (LS) losses, both pre- (left column) and post- (right column) temperature scaling. The results are based on the CIFAR-10 dataset.
}
\label{fig:calibration}
\end{figure*}

As mentioned in Section \ref{sec:calibration}, label smoothing loss (with properly chosen $\delta$) can improve model calibration. This improvement is due to the regularization effect of label smoothing on the classification vector norm and last-layer feature norm, which leads to predictions with lower confidence. 
To support this, Figure \ref{fig:calibration} presents a 20-bin reliability plot \citep{niculescu2005predicting} for models trained under CE and LS losses for CIFAR-10, with a default smoothing hyperparameter of 0.05. 
Additionally, we include a plot showing the percentage of samples in each confidence bin.
Clearly, LS loss exhibits less confidence in its predictions and better model calibration compared to CE loss without temperature scaling. 
In the right column of Figure \ref{fig:calibration}, we employ temperature scaling to calibrate the model, with the hyperparameter $T$ selected through cross-validation.  Notably, after temperature scaling, models trained with CE loss achieve even better calibration than those trained with LS loss. This is because temperature scaling counteracts the regularization effect of label smoothing on the classification vector norm and feature norm. Consequently, LS loss can result in worse test expected calibration error (ECE) due to an excessive level of NC1.

\subsection{Empirical Validation of Theoretical Results} \label{validate_th}

In Theorem \ref{th1}, we derive closed-form solutions for both the mean feature matrix $\widebar{\bm{H}}$ and the classifier weight matrix $\bm{W}$, both exhibiting a simplex ETF structure. To validate these theoretical findings, we train ResNet18 models on CIFAR10 dataset with varying smoothing parameters $\delta$. 
To ensure consistency with the UFM model, we adopted the simulated UFM model as introduced in Section \ref{sec:a_exp}, where weight decay is disabled, and L2 regularization is applied to the last-layer features and classification parameters, ensuring exact regularization effects. 
We set the regularization parameters as follows: $\lambda_W = 10^{-3}$, $\lambda_H = 10^{-6}$, and $\lambda_b = 10^{-2}$. 
Under these settings, we compare the trained average feature norm $\widebar{\| \bm{h} \|} = \sum_{k=1}^K \| \bar{\bm{h}}_k -\bm{h}_G\| /K $ and average classification vector norm $\widebar{\| \bm{w} \|} = \sum_{k=1}^K \| \bm{w}_k \| /K $ with their theoretical counterparts derived from Theorem \ref{th1} which can be represented as 
\[\widebar{\| \bm{h} \|}^\ast = \sqrt{(K-1)a^\delta/K} \left( \frac{n\lambda_H}{\lambda_W} \right)^{-1/4}, 
\quad
\widebar{\| \bm{w} \|}^\ast = \sqrt{(K-1)a^\delta/K} \left( \frac{n\lambda_H}{\lambda_W} \right)^{1/4}. \]
Figure \ref{fig:wh_norm} illustrates the differences between the trained predictions $\widebar{\| \bm{h} \|}$ and $\widebar{\| \bm{w} \|}$ and their theoretical solutions for models trained under LS loss with different $\delta$. The plot shows a clear decreasing trend in the relative error between the empirical and theoretical values for both the feature and classification vector norms, with differences reducing to below 0.01 across all cases. These results demonstrate that the empirical findings align with the theoretical predictions, thereby verifying Theorem \ref{th1}.

\begin{figure}[h]
\begin{center}
\includegraphics[width=0.85\linewidth]{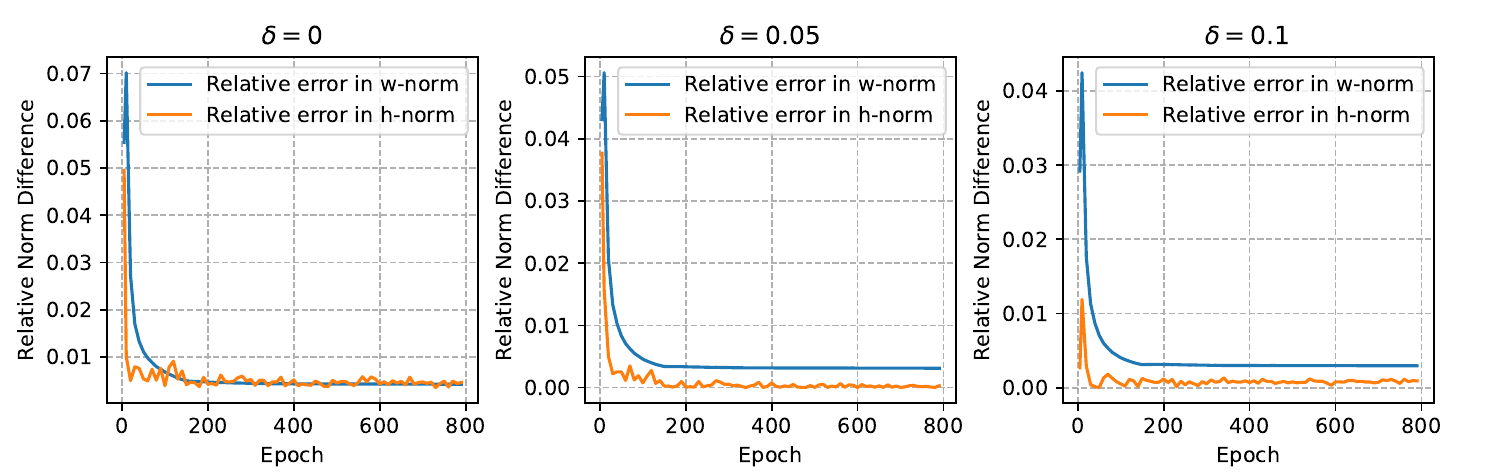}
\end{center}
\caption{Relative error of \textcolor{C0}{classification vector norm} $\widebar{\| \bm{w} \|}$ and 
\textcolor{orange}{feature norm} $\widebar{\| \bm{h} \|}$ compared to their theoretical results over epochs for ResNet-18 on CIFAR10. \looseness=-1
}
\label{fig:wh_norm}
\end{figure}

\begin{figure}[h]
\begin{center}
\includegraphics[width=1.0\linewidth]{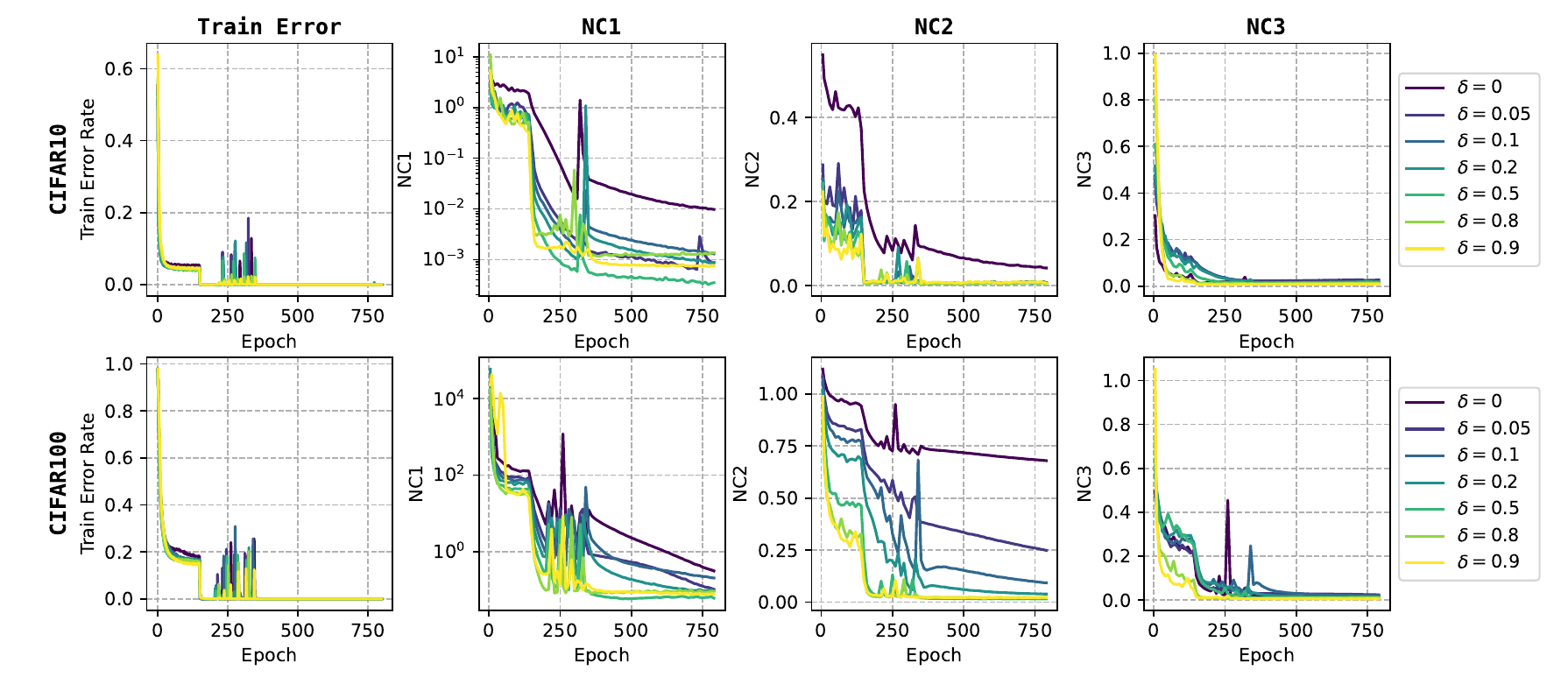}
\end{center}
\caption{Comparison of model convergence under weight decay regularization. From left to right, the figures display the models' training error rate, NC1, NC2, and NC3 with varying smoothing hyperparameters $\delta$. \looseness=-1
}
\label{fig:converge_rate}
\end{figure}

According to Theorem \ref{th:optim},  models trained with label smoothing loss (satisfying $\sqrt{KN}\lambda_{Z} + \delta < 1$) converges faster than models trained with cross-entropy loss. In Figure \ref{fig:converge_rate}, we train ResNet18 and ResNet50 on CIFAR10 and CIFAR100, respectively, with a default weight decay value of $5\times 10^{-4}$. Comparing models trained with different smoothing hyperparameters, we observe that for a wide range of 
$\delta$ values, models with larger $\delta$ converge faster in terms of both training error and NC metrics.

Furthermore, we investigate model convergence under the simulated UFM model, where weight decay is disabled, and L2 regularization is added. As illustrated in Figure \ref{fig:ufm_converge}, across a wide range of $\delta$ such that $\sqrt{KN}\lambda_{Z} + \delta < 1$, we observe faster convergence for larger $\delta$ values.

\begin{figure}[h]
\begin{center}
\includegraphics[width=1.0\linewidth]{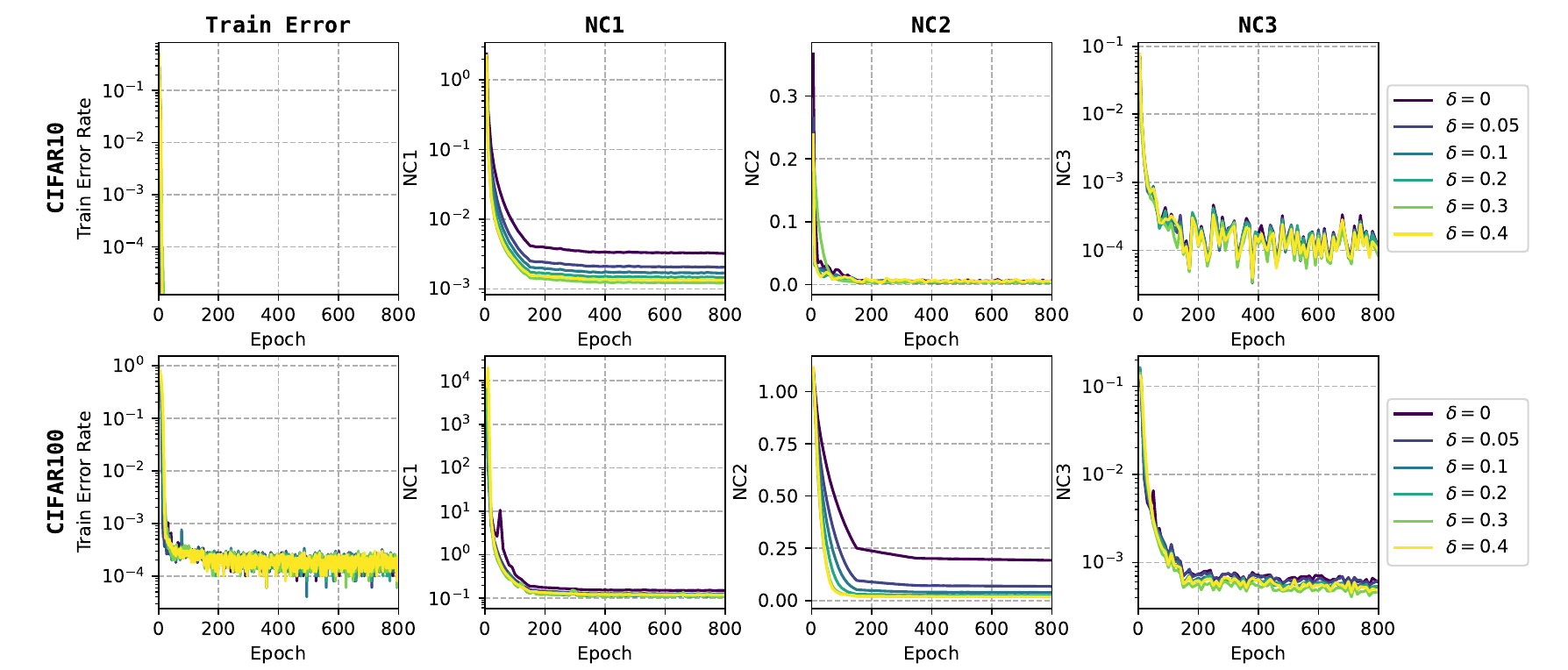}
\end{center}
\caption{Comparison of model convergence under simulated UFM model. From left to right, the figures display the models' training error rate, NC1, NC2, and NC3 with varying smoothing hyperparameters $\delta$. \looseness=-1
}
\label{fig:ufm_converge}
\end{figure}

\section{Proofs} \label{sec:a_proof}

The first key observation of the phenomenon of neural collapse, that is (NC1), refers to a type of collapse that involves the convergence of the feature of samples from the same class to a unique mean feature vector. The second key observation of neural collapse, namely (NC2), involves these unique mean feature vectors (after recentering by their global mean) as they form an equiangular tight frame (ETF), i.e., they share the same pairwise angles and length. Before providing the theoretical poof, we first formally define the rank $K$ canonical simplex ETF in the definition below. 

\subsection{Basics} \label{a1}

\begin{definition}{(K-simplex ETF)} A K-simplex ETF is a collection of points in $\mathbb{R}^d$ specified by the columns of the matrix 
\[ \bm{M} = \sqrt{\frac{K}{K-1}} \bm{P} \left( \bm{I}_K - \frac{1}{K} \mathbf{1}_K \mathbf{1}_K^\top \right) , \]

where $\bm{I}_K \in \mathbb{R}^{K \times K}$ is the identity matrix and $\mathbf{1}_K\in \mathbb{R}^{K}$ is the ones vector, and $\bm{P} \in \mathbb{R}^{d\times K} (d \geq K )$ is a partial-orthogonal matrix such that $\bm{P}^\top \bm{P}=\bm{I}_K$.
\end{definition}

Note that the matrix $\bm{M}$ satisfies: 
\[ \bm{M}^\top \bm{M} =  \frac{K}{K-1} \left( \bm{I}_K - \frac{1}{K} \mathbf{1}_K \mathbf{1}_K^\top \right) . \]
Next, we prove a series of lemmas that will prove crucial upon establishing our main theorems.

\begin{lemma}{(Young's Inequality)} \label{lm:ineq}
Let $p, q$ be positive numbers satisfying $\frac{1}{p} + \frac{1}{q}=1$. Then for any $a, b\in \mathbb{R}$, we have 
\[
|ab| \leq \frac{|a|^p}{p} + \frac{|b|^q}{q}, 
\]
where the equality holds if and only if $|a|^p=|b|^q$. The case for $p=q=2$ is just the AM-GM inequality which is $|ab| \leq \frac{1}{2} \left( a^2 + b^2 \right)$, where the equality holds if and only if $|a|=|b|$
\end{lemma}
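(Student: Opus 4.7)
The plan is to reduce Young's inequality to the strict concavity of the logarithm on $(0, \infty)$. First I would dispose of the degenerate case: if either $a = 0$ or $b = 0$, the left-hand side equals zero while the right-hand side is a sum of two non-negative terms, so the inequality holds trivially, and equality occurs exactly when the other of $|a|^p, |b|^q$ also vanishes, which is consistent with the stated equality condition $|a|^p = |b|^q$.

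For the main case $|a|, |b| > 0$, the key step is to apply Jensen's inequality to $\log$ (which is strictly concave on $(0, \infty)$) with convex weights $1/p$ and $1/q$; these are positive and sum to $1$ by hypothesis. Specifically, taking $u = |a|^p$ and $v = |b|^q$, concavity gives
\[
\log\!\left( \frac{|a|^p}{p} + \frac{|b|^q}{q} \right) \;\geq\; \frac{1}{p}\log |a|^p + \frac{1}{q}\log |b|^q \;=\; \log |a| + \log |b| \;=\; \log |ab|.
\]
Applying the monotonicity of $\log$ (equivalently, exponentiating both sides) yields $|ab| \leq |a|^p/p + |b|^q/q$, which is the desired inequality. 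Strict concavity of $\log$ also characterizes the equality case of the Jensen step: equality holds if and only if $u = v$, i.e., $|a|^p = |b|^q$, which is precisely the asserted equality condition.

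The $p = q = 2$ special case then follows by direct substitution: the inequality collapses to $|ab| \leq \tfrac{1}{2}(a^2 + b^2)$ with equality if and only if $|a| = |b|$, which is exactly the AM--GM inequality applied to the non-negative quantities $a^2$ and $b^2$ (and can alternatively be seen as expanding $(|a|-|b|)^2 \geq 0$).

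I do not anticipate a substantive obstacle: this is a classical one-line consequence of concavity of $\log$. The only minor bookkeeping is handling the boundary case $ab = 0$ and translating the Jensen equality condition $u = v$ back into the stated form $|a|^p = |b|^q$. An alternative route, should a self-contained argument be preferred, would be to invoke convexity of $\exp$ applied to $x = \log |a|^p$ and $y = \log |b|^q$ with the same weights $1/p, 1/q$, which yields the inequality in a symmetric fashion; either route is essentially equivalent.
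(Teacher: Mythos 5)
Your proof is correct and complete: the reduction to strict concavity of $\log$ with weights $1/p$, $1/q$ applied to $u=|a|^p$, $v=|b|^q$ is the standard argument, your handling of the degenerate case $ab=0$ is careful (and needed, since $\log$ is undefined there), and the equality characterization $u=v$ translates exactly to the stated condition $|a|^p=|b|^q$. Note that the paper itself states this lemma without proof, treating it as a classical fact, so there is no in-paper argument to compare against; your write-up would serve as a valid self-contained justification of it.
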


\begin{lemma} \label{lm:inequality}
For any fixed $\bm{Z} \in \mathbb{R}^{K \times N}$ and $\alpha>0$, we have
\begin{equation}
    \min_{\bm{Z} = \bm{W}^\top \bm{H}} 
    \frac{1}{2\sqrt{\alpha}}\left( \|\bm{W} \|^2_F  + \alpha \|\bm{H} \|^2_F\right) 
    = \| \bm{Z} \|_\ast . 
\end{equation}
Here $\| \bm{Z} \|_\ast$ denotes the nuclear norm of $\bm{Z}$: 
\[ \| \bm{Z} \|_\ast
    :=\sum_k \sigma_k(\bm{Z}) = trace(\Sigma), \text{ with } \bm{Z} = \bm{U}\Sigma\bm{V}^\top, 
\]
where $\{ \sigma_k \}_{k=1}^{min(K,N)}$ denote the singular values of $\bm{Z}$, and $\bm{Z} = \bm{U}\Sigma\bm{V}^\top$ is the singular value decomposition (SVD) of $\bm{Z}$. 

\end{lemma}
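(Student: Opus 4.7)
The plan is to prove this classical variational characterization of the nuclear norm in two steps: first reduce to the symmetric case $\alpha = 1$ by a change of variables, then establish matching inequalities.

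The first step is the rescaling. I would introduce $\widetilde{\bm{W}} = \alpha^{-1/4}\bm{W}$ and $\widetilde{\bm{H}} = \alpha^{1/4}\bm{H}$. This rescaling preserves the product $\widetilde{\bm{W}}^\top \widetilde{\bm{H}} = \bm{W}^\top \bm{H} = \bm{Z}$, and a direct calculation shows
\[
\frac{1}{2\sqrt{\alpha}}\bigl(\|\bm{W}\|_F^2 + \alpha\|\bm{H}\|_F^2\bigr) = \frac{1}{2}\bigl(\|\widetilde{\bm{W}}\|_F^2 + \|\widetilde{\bm{H}}\|_F^2\bigr).
\]
Hence it suffices to prove $\min_{\bm{Z}=\bm{W}^\top \bm{H}} \tfrac{1}{2}(\|\bm{W}\|_F^2+\|\bm{H}\|_F^2) = \|\bm{Z}\|_\ast$.

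For the lower bound, I would fix an SVD $\bm{Z} = \bm{U}\bm{\Sigma}\bm{V}^\top$ with $\bm{U}\in\mathbb{R}^{K\times r}$, $\bm{V}\in\mathbb{R}^{N\times r}$ having orthonormal columns and $r=\mathrm{rank}(\bm{Z})$, and write
\[
\|\bm{Z}\|_\ast = \mathrm{tr}(\bm{\Sigma}) = \mathrm{tr}(\bm{U}^\top \bm{Z}\bm{V}) = \mathrm{tr}\bigl((\bm{W}\bm{U})^\top(\bm{H}\bm{V})\bigr).
\]
Applying the Cauchy–Schwarz inequality $|\mathrm{tr}(\bm{A}^\top \bm{B})|\le \|\bm{A}\|_F\|\bm{B}\|_F$ to $\bm{A}=\bm{W}\bm{U}$, $\bm{B}=\bm{H}\bm{V}$, followed by the AM–GM step of Lemma \ref{lm:ineq} and the observation $\|\bm{W}\bm{U}\|_F^2=\mathrm{tr}(\bm{W}\bm{U}\bm{U}^\top \bm{W}^\top)\le \|\bm{W}\|_F^2$ (since $\bm{U}\bm{U}^\top \preceq \bm{I}_K$), and similarly for $\bm{H}\bm{V}$, yields $\|\bm{Z}\|_\ast \le \tfrac{1}{2}(\|\bm{W}\|_F^2+\|\bm{H}\|_F^2)$.

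For the upper bound (attainability), I would construct an explicit minimizer from the SVD. Picking any partial-orthogonal matrix $\bm{P}\in\mathbb{R}^{d\times K}$ with $\bm{P}^\top \bm{P}=\bm{I}_K$ (which requires the standing assumption $d\ge K$ already used in Theorem \ref{th1}), set
\[
\bm{W} = \bm{P}\,\bm{U}\bm{\Sigma}^{1/2}\bm{U}^\top, \qquad \bm{H} = \bm{P}\,\bm{U}\bm{\Sigma}^{1/2}\bm{V}^\top.
\]
A direct check gives $\bm{W}^\top \bm{H} = \bm{U}\bm{\Sigma}^{1/2}\bm{U}^\top \bm{P}^\top \bm{P}\,\bm{U}\bm{\Sigma}^{1/2}\bm{V}^\top = \bm{U}\bm{\Sigma}\bm{V}^\top = \bm{Z}$, and $\|\bm{W}\|_F^2 = \|\bm{H}\|_F^2 = \mathrm{tr}(\bm{\Sigma}) = \|\bm{Z}\|_\ast$, so equality holds. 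Undoing the rescaling recovers the claimed identity for general $\alpha>0$.

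I expect no real obstacle beyond keeping the dimensions straight in the SVD (in particular tracking when $U$ and $V$ are tall versus square, and ensuring $d\ge \mathrm{rank}(\bm{Z})$ for the constructive factorization). The Cauchy–Schwarz step is tight precisely when $\bm{W}\bm{U}$ and $\bm{H}\bm{V}$ are proportional, and the AM–GM step is tight precisely when $\|\bm{W}\bm{U}\|_F=\|\bm{H}\bm{V}\|_F$; the explicit factorization I construct satisfies both, which explains why equality is achieved and also characterizes the minimizers used later in the paper to pin down the self-dual structure of $\bm{W}$ and $\bm{H}$.
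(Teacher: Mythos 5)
Your proof is correct and takes essentially the same route as the paper's: both establish the lower bound via the SVD and a trace inequality (your Cauchy--Schwarz plus AM--GM step is equivalent to the paper's elementwise Young's-inequality step from Lemma \ref{lm:ineq}), and both exhibit an explicit SVD-based factorization attaining the bound. If anything, yours is slightly more careful on bookkeeping the paper glosses over --- the reduction to $\alpha=1$ is cosmetic, but using the compact SVD with $\bm{U}\bm{U}^\top \preceq \bm{I}_K$ and embedding the minimizer via the partial-orthogonal $\bm{P}$ (needing only $d \ge \mathrm{rank}(\bm{Z})$) makes the dimension requirements explicit where the paper's construction leaves them implicit.
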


\begin{proof}
    Let $\bm{Z}=\bm{U}\Sigma\bm{V}^\top$ be the SVD of $\bm{Z}$. From the fact that $\bm{U}\bm{U}^\top=\mathbf{I}$,  $\bm{V}\bm{V}^\top=\mathbf{I}$, and $trace\left(\bm{A}^\top \bm{A} \right) = \| \bm{A} \|^2 _F$, we have 
    \begin{align*}
    \| \bm{Z} \|_\ast = trace(\Sigma) 
    & = \frac{1}{2\sqrt{\alpha}} trace \left(\sqrt{\alpha} \bm{U}^\top\bm{U}\Sigma \right)
      + 
      \frac{\sqrt{\alpha}}{2} trace \left(\frac{1}{\sqrt{\alpha}} \Sigma \bm{V}^\top\bm{V} \right) \\ 
    & = \frac{1}{2\sqrt{\alpha}} \left(
    \left\| \alpha^{1/4}\bm{U}\Sigma^{1/2} \right\|^2_F   + 
    \alpha \left\| \alpha^{-1/4} \Sigma^{1/2} \bm{V}^\top \right\|^2_F
    \right). 
    \end{align*}

    This implies that there exists some $\bm{W}=\alpha^{1/4}\Sigma^{1/2}\bm{U}^\top$ and $\bm{H}=\alpha^{-1/4} \Sigma^{1/2} \bm{V}^\top $, such that $\| \bm{Z} \|_\ast = 
    \frac{1}{2\sqrt{\alpha}}\left( \|\bm{W} \|^2_F  + \alpha \|\bm{H} \|^2_F\right) 
    $, which further indicates that 
    \begin{equation} \label{eq:z1}
    \| \bm{Z} \|_\ast \geq 
    \min_{\bm{Z} = \bm{W}^\top \bm{H}} 
    \frac{1}{2\sqrt{\alpha}}\left( \|\bm{W} \|^2_F  + \alpha \|\bm{H} \|^2_F\right). 
    \end{equation}

    On the other hand, for any $\bm{W}^\top\bm{H}=\bm{Z}$, we have
    \begin{align*}
        \| \bm{Z}\|_\ast 
        & = trace(\Sigma) = trace(\bm{U}^\top\bm{Z}V) = trace(\bm{U}^\top\bm{W}^\top\bm{H}\bm{V}) \\ 
        & \leq \frac{1}{2\sqrt{\alpha}} \left\| \bm{U}^\top \bm{W}^\top \right\|^2_F + \frac{\sqrt{\alpha}}{2} \left\| \bm{H} \bm{V} \right\|^2_F 
        = \frac{1}{2\sqrt{\alpha}} \left( \| \bm{W} \|^2_F + \alpha \| \bm{H} \|^2_F \right),
    \end{align*}
    where the first inequality is guaranteed by Young's inequality in Lemma \ref{lm:ineq},  and equality only holds when $\bm{W}\bm{U} = \sqrt{\alpha} \bm{H}\bm{V}$. The last equality follows because $\bm{U} \bm{U}^\top=\bm{I}$ and $\bm{V} \bm{V}^\top=\bm{I}$ . Therefore, we have 
    \begin{equation} \label{eq:z2}
         \| \bm{Z}\|_\ast  \leq 
         \min_{\bm{Z} = \bm{W}^\top \bm{H} } \frac{1}{2\sqrt{\alpha}} 
         \left( \|\bm{W} \|^2_F  + \alpha \|\bm{H} \|^2_F\right).
    \end{equation}

    Combining the results in (\ref{eq:z1}) and (\ref{eq:z2}), we complete the proof. 
     
\end{proof}

\begin{proposition} \label{prop}

    Consider matrics $\bm{H} = [\bm{H}_1, \cdots, \bm{H}_n] \in \mathbb{R}^{d \times N}$ and $\bm{Z} = [\bm{Z}_1, \cdots, \bm{Z}_n] \in \mathbb{R}^{K \times N}$, 
    where  $\bm{H}_i \in \mathbb{R}^{d \times K}$ and $\bm{Z}_i \in \mathbb{R}^{K \times K}$ with $N=nK$.
    Let $\widebar{\bm{H}} = \frac{1}{n} \sum_i \bm{H}_i$ and $\widebar{\bm{Z}} = \frac{1}{n} \sum_i \bm{Z}_i$. Then $\bm{Z} = \bm{W}^\top \bm{H}$ indicates $\widebar{\bm{Z}} = \bm{W}^\top \widebar{\bm{H}}$. If $\widebar{\bm{Z}}$ is a symmetric matrix, then we have 
    \begin{equation}
        \min_{\bm{Z} = \bm{W}^\top \bm{H}} 
        \frac{1}{2\sqrt{\alpha}}\left( \|\bm{W} \|^2_F  + \alpha \|\bm{H} \|^2_F\right) 
        = \min_{\widebar{\bm{Z}} = \bm{W}^\top \widebar{\bm{H}}} 
        \frac{1}{2\sqrt{\alpha}}\left( \|\bm{W} \|^2_F  + \alpha n \| \widebar{\bm{H}} \|^2_F\right) 
        = \| \widebar{\bm{Z}} \|_\ast, 
    \end{equation}
    with the minimum is reached only if $\bm{H}_i=\widebar{\bm{H}}$ (for $ \forall i=1, \cdots, n$) and $\bm{W} = \sqrt{\alpha n} \widebar{\bm{H}}$. 

\end{proposition}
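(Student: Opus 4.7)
The plan is to combine a convexity reduction from the full-data regulariser to its class-mean counterpart with a direct invocation of Lemma \ref{lm:inequality} at the class-mean level, and then to read off the equality cases from the proof of that lemma to extract the claimed minimiser.

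First, the implication $\bm{Z} = \bm{W}^\top \bm{H} \Rightarrow \widebar{\bm{Z}} = \bm{W}^\top \widebar{\bm{H}}$ is immediate by averaging the block identities $\bm{Z}_i = \bm{W}^\top \bm{H}_i$ over $i = 1, \ldots, n$. To compare the leftmost minimum with the middle one, I would use the centroid decomposition
\[
\|\bm{H}\|_F^2 \;=\; \sum_{i=1}^n \|\bm{H}_i\|_F^2 \;=\; \sum_{i=1}^n \|\bm{H}_i - \widebar{\bm{H}}\|_F^2 + n\|\widebar{\bm{H}}\|_F^2 \;\ge\; n\|\widebar{\bm{H}}\|_F^2,
\]
with equality if and only if $\bm{H}_i = \widebar{\bm{H}}$ for every $i$. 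This shows that for every feasible $(\bm{W}, \bm{H})$ on the left, the pair $(\bm{W}, \widebar{\bm{H}})$ is feasible for the middle problem and attains a no-larger value, so LHS $\ge$ middle. For the reverse inequality, given any $(\bm{W}, \widebar{\bm{H}})$ feasible for the middle problem I would lift to $\bm{H}_i := \widebar{\bm{H}}$ for every $i$; under the standing block structure $\bm{Z}_i = \widebar{\bm{Z}}$ this is feasible on the left with matching objective value, closing the first equality and pinning down $\bm{H}_i = \widebar{\bm{H}}$ at the optimum.

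For the second equality I would apply Lemma \ref{lm:inequality} to the constraint $\widebar{\bm{Z}} = \bm{W}^\top \widebar{\bm{H}}$, replacing the parameter $\alpha$ of the lemma by $\alpha n$ so that the coefficient on $\|\widebar{\bm{H}}\|_F^2$ matches. This identifies the middle minimum with the nuclear norm $\|\widebar{\bm{Z}}\|_\ast$. From the equality conditions inside the proof of Lemma \ref{lm:inequality}, namely the Young's-inequality condition $\bm{W}\bm{U} = \sqrt{\alpha n}\,\widebar{\bm{H}}\bm{V}$ where $\widebar{\bm{Z}} = \bm{U}\Sigma\bm{V}^\top$ is the SVD, the symmetry hypothesis on $\widebar{\bm{Z}}$ permits choosing $\bm{U} = \bm{V}$ after absorbing signs from the eigendecomposition, collapsing the relation to $\bm{W} = \sqrt{\alpha n}\,\widebar{\bm{H}}$, precisely the minimiser asserted in the proposition.

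The main obstacle I anticipate is the sign bookkeeping when converting the SVD-level equality $\bm{W}\bm{U} = \sqrt{\alpha n}\,\widebar{\bm{H}}\bm{V}$ into the clean identity $\bm{W} = \sqrt{\alpha n}\,\widebar{\bm{H}}$: for an indefinite symmetric matrix with eigendecomposition $\widebar{\bm{Z}} = \bm{U}\Lambda\bm{U}^\top$ one has $\bm{V} = \bm{U}\,\mathrm{sign}(\Lambda)$, and the tidy conclusion really corresponds to the positive semidefinite regime. I would resolve this by noting that the candidate pair $(\bm{W}, \widebar{\bm{H}}) = (\sqrt{\alpha n}\,\widebar{\bm{H}}, \widebar{\bm{H}})$ forces $\widebar{\bm{Z}} = \sqrt{\alpha n}\,\widebar{\bm{H}}^\top \widebar{\bm{H}} \succeq 0$, so the minimum is attained on the PSD face; this is automatically the case in the intended application within the proof of Theorem \ref{th1}, where the target $\widebar{\bm{Z}}$ induced by the simplex ETF structure is positive semidefinite.
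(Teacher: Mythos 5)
Your proof follows essentially the same route as the paper's: reduce $\|\bm{H}\|_F^2 \ge n\|\widebar{\bm{H}}\|_F^2$ with equality iff all blocks coincide, then invoke Lemma \ref{lm:inequality} with parameter $\alpha n$ on the class-mean constraint and read off the equality conditions. You are in fact somewhat more careful than the paper on two points it glosses over --- that the lift $\bm{H}_i := \widebar{\bm{H}}$ is only feasible for the left-hand problem when $\bm{Z}_i = \widebar{\bm{Z}}$ for all $i$, and that the clean identity $\bm{W} = \sqrt{\alpha n}\,\widebar{\bm{H}}$ really forces $\widebar{\bm{Z}} \succeq 0$ rather than mere symmetry.
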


\begin{proof} 
    From Lemma \ref{lm:ineq}, we have 
      $\| \bm{H} \|^2_F =  \sum_i \left\|  \bm{H}_i  \right \|^2_F \geq n \| \widebar{\bm{H}} \|^2_F$, with the equality hold only when $\bm{H}_i=\bm{H}_j$ for any $i \neq j$ . Consequently, this yields the following result: 
      \[
      \min_{\bm{Z} = \bm{W}^\top \bm{H}} 
        \frac{1}{2\sqrt{\alpha}}\left( \|\bm{W} \|^2_F  + \alpha \|\bm{H} \|^2_F\right) 
        = \min_{\widebar{\bm{Z}} = \bm{W}^\top \widebar{\bm{H}}} 
        \frac{1}{2\sqrt{\alpha}}\left( \|\bm{W} \|^2_F  + \alpha n \| \widebar{\bm{H}} \|^2_F\right). 
      \]
    Utilizing Lemma \ref{lm:inequality}, we further deduce: 
    \[
    \min_{\widebar{\bm{Z}} = \bm{W}^\top \widebar{\bm{H}}} 
        \frac{1}{2\sqrt{\alpha}}\left( \|\bm{W} \|^2_F  + \alpha n \| \widebar{\bm{H}} \|^2_F\right)  
    = \| \widebar{\bm{Z}} \|_\ast.  
    \]
    This minimum is achieved only when $\bm{W} = \sqrt{\alpha n} \widebar{\bm{H}}$, thus completing the proof.
\end{proof}

\section{Proof of Theorem \ref{th1}}

\subsection{Proof of Theorem \ref{th1}} 
In this section, we present the proof of Theorem \ref{th1} in Section \ref{sec:proof}, which we restate as follows. 

\begin{theorem}{(Global Optimizer).}\label{th1_a}
    Assume that the feature dimension $d$ is no less than the number of classes $K$, i.e., $d \ge K$, and the dataset is balanced.  Then any global optimizer of $(\bm{W}, \bm{H}, \bm{b})$ of 
    \begin{equation} \label{eq:l2_a}
    \min_{\bm{W}, \bm{H}, \bm{b}} \mathcal{L}(\bm{W}, \bm{H}, \bm{b}) :=
    \frac{1}{N} {l_{CE}}( \bm{W}^{\top}\bm{H} + \bm{b}\mathbf{1}_N^\top, \bm{Y}^{\delta}) + 
    \frac{\lambda_{W}}{2} \|\bm{W}\|^2_{F} +  \frac{\lambda_{H}}{2} \|\bm{H}\|^2_{F} + \frac{\lambda_{b}}{2} \|\bm{b}\|^2
    \end{equation}
     satisfies the following properties: 

    The classification weight matrix $\bm{W}$ is given by 
    \begin{equation}
        \bm{W} = \left( \frac{\lambda_{H} n}{\lambda_W} \right)^{1/4} \sqrt{a^{\delta}} \bm{P} \left(  \bm{I}_K - \bm{1}_K \bm{1}_K^\top /K \right).  
    \end{equation}
    The matrix of last-layer feature $\bm{H}$ can be represented as 
    \begin{equation} \label{eq:mH_a}
        \bm{H} = \widebar{\bm{H}} \bm{Y}, 
        \quad
        \widebar{\bm{H}} = \left( \frac{\lambda_W}{\lambda_H n} \right)^{1/4} \sqrt{a^{\delta}} \bm{P} \left( \bm{I}_K - \bm{1}_K \bm{1}_K^\top /K \right). 
    \end{equation}
    The bias $\bm{b}$ is a zero vector, i.e. $\bm{b}=\bm{0}$. 
    
    Here, $\bm{P} \in \mathbb{R}^{d\times K} (d \geq K )$ is a partial orthogonal matrix such that $\bm{P}^\top \bm{P}=\bm{I}_K$ and $a^\delta$ satisfies:  
    \begin{itemize}
            \item [(i)] if $\sqrt{KN}\lambda_{Z} + \delta \geq 1$, then $a^{\delta} = 0$; 
            \item [(ii)] if $\sqrt{KN}\lambda_{Z} + \delta < 1$, then 
            \[ a^{\delta} = \log \left( \frac{K}{\sqrt{KN}\lambda_{Z} +\delta} - K+1) \right).
            \] 
        \end{itemize}
    with $\lambda_Z = \sqrt{\lambda_W \lambda_H}$.

\end{theorem}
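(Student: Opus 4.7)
The plan is to reduce the high-dimensional minimization over $(\bm{W}, \bm{H}, \bm{b})$ to a one-variable convex problem in three successive stages, then solve that scalar problem and read off the self-dual simplex ETF structure together with the closed-form scalar $a^{\delta}$.

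First, I would establish within-class collapse (NC1) by convexity. Since $l_{CE}(\cdot, \bm{y}_k^{\delta})$ is convex in the logit and $\|\bm{H}\|_F^2 = \sum_{k,i}\|\bm{h}_{ki}\|^2$ is strictly convex, Jensen's inequality shows that replacing the features of class $k$ by their mean $\bar{\bm{h}}_k$ weakly decreases the CE term and strictly decreases the feature regularizer unless the features already coincide. So any minimizer satisfies $\bm{h}_{ki} = \bar{\bm{h}}_k$ and $\|\bm{H}\|_F^2 = n\|\widebar{\bm{H}}\|_F^2$, reducing the problem to $(\bm{W}, \widebar{\bm{H}}, \bm{b})$. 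Next, Lemma~\ref{lm:inequality} applied with scaling $\alpha = n\lambda_H/\lambda_W$ gives the nuclear-norm lower bound
\[
\tfrac{\lambda_W}{2}\|\bm{W}\|_F^2 + \tfrac{n\lambda_H}{2}\|\widebar{\bm{H}}\|_F^2 \;\ge\; \sqrt{n\lambda_W\lambda_H}\,\|\bm{W}^{\top}\widebar{\bm{H}}\|_{\ast},
\]
with equality iff $\bm{W}$ and $\widebar{\bm{H}}$ satisfy the alignment condition of Proposition~\ref{prop}, namely $\bm{W} = \sqrt{n\lambda_H/\lambda_W}\,\widebar{\bm{H}}$ up to an orthogonal rotation in the range of $\widebar{\bm{H}}$.

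To pin down the structure of $\widebar{\bm{Z}}_0 := \bm{W}^{\top}\widebar{\bm{H}}$ and of $\bm{b}$, I would invoke a symmetrization argument based on the $S_K$-invariance of the problem: for any permutation matrix $\Pi$, simultaneously replacing $(\widebar{\bm{Z}}_0,\bm{b})$ by $(\Pi\widebar{\bm{Z}}_0\Pi^{\top}, \Pi\bm{b})$ leaves the objective unchanged, because $\bm{Y}^{\delta}$ is permutation-covariant. Since CE loss, nuclear norm, and squared norm are all convex, averaging a minimizer over $S_K$ yields another minimizer whose bias is a scalar multiple of $\mathbf{1}_K$ and whose $\widebar{\bm{Z}}_0$ has the two-parameter form $\alpha\bm{I}_K + \beta\mathbf{1}_K\mathbf{1}_K^{\top}$. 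A constant vector added to all entries of a single column is absorbed by softmax, so the bias regularizer $\tfrac{\lambda_b}{2}\|\bm{b}\|^2$ with $\lambda_b > 0$ forces the scalar multiple to vanish, giving $\bm{b} = \bm{0}$. Minimizing the nuclear norm $|\alpha + K\beta| + (K-1)|\alpha|$ over $\beta$ for fixed $\alpha$ then yields $\beta = -\alpha/K$, so $\widebar{\bm{Z}}_0$ is proportional to $\bm{I}_K - \tfrac{1}{K}\mathbf{1}_K\mathbf{1}_K^{\top}$, i.e.\ the scaled simplex ETF.

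Finally, substituting this ansatz collapses everything to a single-variable convex program
\[
\min_{\alpha \ge 0}\; -\alpha\!\left(1 - \tfrac{\delta(K-1)}{K}\right) + \log(e^{\alpha} + K - 1) + \sqrt{n}\,\lambda_Z (K-1)\,\alpha,
\]
whose stationary condition rearranges algebraically to $e^{\alpha} + K - 1 = K/(\delta + \sqrt{KN}\lambda_Z)$, matching case (ii) of the definition of $a^{\delta}$ after the appropriate rescaling absorbed into the $(K\bm{I}_K - \mathbf{1}_K\mathbf{1}_K^{\top})$ normalization used in the theorem. In case (i) the derivative at $\alpha = 0^{+}$ is non-negative, so the minimum sits at $\alpha = 0$ and the global optimum is the trivial solution. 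With $\alpha$ fixed, the equality case of Proposition~\ref{prop} combined with the fact that the nonzero singular values of $\widebar{\bm{Z}}_0$ all coincide recovers the explicit $\bm{W}$ and $\widebar{\bm{H}}$ up to an arbitrary partial orthogonal $\bm{P}\in\mathbb{R}^{d\times K}$ with $\bm{P}^{\top}\bm{P}=\bm{I}_K$, which is where the hypothesis $d\ge K$ is used. The main obstacle I expect is the symmetrization step: since the nuclear norm is convex but not strictly so, averaging produces \emph{one} minimizer of the claimed form rather than immediately ruling out asymmetric ones, and closing that gap requires combining the symmetrized optimum with the equality conditions in Lemma~\ref{lm:inequality} and Proposition~\ref{prop} to verify that \emph{every} global minimizer must have the stated structure.
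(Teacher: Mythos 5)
Your route is genuinely different from the paper's in its middle step, and most of it checks out. You collapse the features to their class means directly via strict convexity of $\|\bm{H}\|_F^2$ (the paper instead proves collapse at the level of $\bm{Z}$ after the nuclear-norm reduction, using a permutation argument on $\|\bm{Z}\|_\ast$); you then obtain the simplex-ETF structure of $\widebar{\bm{Z}}$ by $S_K$-symmetrization plus explicit minimization over the $\mathbf{1}_K\mathbf{1}_K^\top$-component, whereas the paper imports the structural result from Lemma 8 of Zhou et al.\ and works with the subdifferential of the nuclear norm (Lemmas \ref{lm:optim}--\ref{lm:z_solution}). Your scalar reduction is correct: I verified that your one-variable program, with $\alpha = a^\delta K$, has stationarity condition $e^{\alpha}+K-1 = K/(\delta+\sqrt{KN}\lambda_Z)$ and that the derivative at $\alpha=0^{+}$ is nonnegative exactly when $\sqrt{KN}\lambda_Z+\delta\ge 1$, reproducing both cases of $a^\delta$. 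This is arguably more self-contained and elementary than the paper's KKT computation, at the price of the issue below.

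The genuine gap is the one you flag yourself, and it is not cosmetic: the theorem asserts that \emph{every} global optimizer has the stated form, while symmetrization only produces \emph{one} minimizer of that form. Averaging the $K!$ conjugates $(\Pi\widebar{\bm{Z}}_0\Pi^{\top},\Pi\bm{b})$ of a given minimizer yields another minimizer, but to conclude the original equals its average you need strict convexity, and neither the nuclear norm nor the CE loss is strictly convex (CE is invariant to adding multiples of $\mathbf{1}_K$ to a logit column). The fix is available but must be spelled out: the log-sum-exp part is strictly convex on the quotient by the $\mathbf{1}_K$-shift directions, so any two minimizers of the reduced convex problem in $(\widebar{\bm{Z}},\bm{b})$ can differ only by column-wise constant shifts of $\widebar{\bm{Z}}+\bm{b}\mathbf{1}_K^\top$; those residual directions are then eliminated by the $\frac{\lambda_b}{2}\|\bm{b}\|^2$ term and by the observation that $\|\widebar{\bm{Z}}\|_\ast$ is strictly decreased by projecting out the $\mathbf{1}_K\mathbf{1}_K^\top$ component (your $|\alpha+K\beta|+(K-1)|\alpha|$ computation). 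Without this argument your proof establishes the optimal value and exhibits a minimizer attaining it, but not the ``any global optimizer'' clause. A second, minor point: the equality case of Lemma \ref{lm:inequality} pins down $\bm{W}\bm{U}=\sqrt{\alpha}\bm{H}\bm{V}$, i.e.\ $\bm{W}=\sqrt{n\lambda_H/\lambda_W}\,\widebar{\bm{H}}$ exactly rather than merely up to a rotation in the range of $\widebar{\bm{H}}$; the only residual freedom is the partial orthogonal $\bm{P}$ coming from the SVD of $\widebar{\bm{Z}}$, which is where $d\ge K$ enters, as you say.
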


\begin{proof}
    The main idea of proving Theorem \ref{th1} is first to connect the problem (\ref{eq:l2_a}) to its corresponding convex counterpart. This allows us to derive the precise form of the global minimizer for the convex optimization problem.
    Subsequently, we can further characterize the specific structures of $\bm{W}$ and $\bm{H}$ based on the acquired global minimizer.

    \textbf{Connection of (\ref{eq:l2_a}) to a Convex Problem}. Let $\bm{Z} = \bm{W}^{\top} \bm{H} \in \mathbb{R}^{K\times N}$ represent the output logit matrix with $N=nK$ and $\alpha=\frac{\lambda_H}{\lambda_W}$. Utilizing Lemma \ref{lm:inequality}, we get: 
    \begin{align} \label{eq:min_z}
        \min_{ \bm{W}^\top \bm{H} =\bm{Z} } \lambda_{W}\|\bm{W}\|^2_{F} + \lambda_{H}\|\bm{H}\|^2_{F} 
        & = \sqrt{\lambda_{W}\lambda_{H}} \min_{\bm{W}^\top \bm{H} =\bm{Z}} \frac{1}{\sqrt{\alpha}} \left( \|\bm{W}\|^2_{F} + \alpha \|\bm{H}\|^2_{F}   \right) \\  \notag
        & = 2\sqrt{\lambda_{W}\lambda_{H}} \|\bm{Z}\|_{\ast}, 
    \end{align}
    where $\|\bm{Z}\|_{\ast}$ represent nuclear norm of $\bm{Z}$. Additionally from Lemma \ref{lm:inequality}, the minimum is attained only when $\bm{H} = \widebar{\bm{H}} \bm{Y}$ and $\bm{W}=\sqrt{\alpha n}\widebar{\bm{H}}=\sqrt{\frac{\lambda_H}{\lambda_W n}}\widebar{\bm{H}}$. 
    
    Let $\lambda_{Z} := \sqrt{\lambda_{W}\lambda_{H}} $, then Equation (\ref{eq:l2_a}) becomes: 
    \begin{equation} \label{eq:ufm}
        \min_{\bm{Z} , \bm{b}} \mathcal{L}(\bm{Z}, \bm{b}) := 
         \frac{1}{N} {l_{CE}}( \bm{Z} + \bm{b} \mathbf{1}_N^\top, \bm{Y}^{\delta})  
        + \lambda_{Z} \|\bm{Z}\|_{\ast} + \frac{\lambda_{b}}{2} \|\bm{b}\|^{2}, 
    \end{equation} 
    which is a convex optimization problem.

    \textbf{Characterizing the Optimal Solution of (\ref{eq:l2_a}) based on the Convex Program (\ref{eq:ufm})}: We first derive the exact form of the global minimizer for the convex optimization problem (\ref{eq:ufm}). Particularly, we first establish that the predicted logit vectors within each class collapse to their sample means, i.e., $\bm{Z} = {\widebar{\bm{Z}}} \bm{Y}$ (Lemma \ref{lm:z_mean}). Subsequently, we derive the closed-form solution of $\widebar{\bm{Z}}$ in Lemma \ref{lm:z_solution}.

    Furthermore, from Lemma \ref{lm:inequality}, we establish that the minimum in (\ref{eq:min_z}) is attained only if $\bm{H} = \widebar{\bm{H}} \bm{Y}$ and $\bm{W}=\sqrt{\frac{\lambda_H}{\lambda_W n}}\widebar{\bm{H}}$. Since $\bm{Z} = \bm{W}^\top \bm{H}$, combining the above result with Lemma \ref{lm:z_solution} yields the global minimizer $(\bm{W}, \bm{H}, \bm{b})$ of (\ref{eq:l2_a}), satisfying:
    \begin{equation}
        \bm{W} = \left( \frac{\lambda_{H} n}{\lambda_W} \right)^{1/4} \sqrt{a^{\delta}} \bm{P} \left( \bm{I}_K - \bm{1}_K \bm{1}_K^\top /K \right),  
    \end{equation}
    \begin{equation} \label{eq:mH_proof}
        \bm{H} = \widebar{\bm{H}} \bm{Y}, 
        \quad
        \widebar{\bm{H}} = \left( \frac{\lambda_W}{\lambda_H n} \right)^{1/4} \sqrt{a^{\delta}} \bm{P} \left( \bm{I}_K - \bm{1}_K \bm{1}_K^\top /K \right), 
    \end{equation}
    and \[\bm{b}=\bm{0},\] 
    where $\bm{P} \in \mathbb{R}^{d\times K} (d \geq K )$ is a partial orthogonal matrix such that $\bm{P}^\top \bm{P}=\bm{I}_K$ and $a^{\delta}$ is defined per the specifications in Lemma \ref{lm:z_solution}.
    
    Hence, we conclude the proof.

\end{proof}

\subsection{Supporting Lemma}

\begin{lemma}{(Optimality Condition)} \label{lm:optim}
The first-order optimality condition of $\mathcal{L}(\bm{Z}, \bm{b})$ in Equation (\ref{eq:ufm}) is 
\begin{equation} \label{eq:lemma1}
    N^{-1}\left( \bm{Y}^\delta - \bm{P} \right) \in \lambda_Z  \partial \|\bm{Z} \|_{\ast}, 
    \quad
    N^{-1}\left( \bm{Y}^\delta - \bm{P} \right) \mathbf{1}_N = \lambda_b \bm{b},
\end{equation}
where $\bm{P}$ is the prediction matrix defined as 
\begin{equation}
    \bm{P} = \{\bm{p}_{ki}\}_{1\leq k \leq K, 1 \leq i \leq n}  \in \mathbb{R}^{K\times N}, 
    \quad 
    \bm{p}_{ki} := \frac{\exp(\bm{z}_{ki} + \bm{b})}{ \langle \exp(\bm{z}_{ki} + \bm{b}), \mathbf{1}_K \rangle }, 
\end{equation}
$\bm{Y}^{\delta}$ is the matrix of the smoothed soft target defined as 
\begin{equation}
    \bm{Y}^{\delta} = (1-\delta) \bm{Y} + \frac{\delta}{K} \mathbf{1}_K \mathbf{1}_{N}^{\top},
    \quad
    \bm{Y} = [\bm{e}_1 \mathbf{1}^{\top}_n, \cdots, \bm{e}_K \mathbf{1}^{\top}_n] \in \mathbb{R} ^{K\times N},
\end{equation} 
and $\partial \| \bm{Z} \|_{\ast}$ represents the subdifferential of the nuclear norm of $\bm{Z}$. 
\end{lemma}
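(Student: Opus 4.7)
The plan is to exploit convexity: $\mathcal{L}(\bm{Z}, \bm{b})$ in Equation (\ref{eq:ufm}) is the sum of three convex functions — cross-entropy (convex in the logits), the nuclear norm $\|\bm{Z}\|_*$ (convex but nonsmooth), and the smooth quadratic $\tfrac{\lambda_b}{2}\|\bm{b}\|^2$. Since the objective is convex, the first-order subdifferential condition $\bm{0} \in \partial\mathcal{L}$ is both necessary and sufficient for global optimality, so it suffices to compute this inclusion blockwise in $\bm{Z}$ and in $\bm{b}$.

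The main computational step is the gradient of the cross-entropy with respect to the logits. Writing $\bm{u}_{ki} := \bm{z}_{ki} + \bm{b}$ and $\bm{p}_{ki} := \text{Softmax}(\bm{u}_{ki})$, the classical log-softmax derivative gives $\nabla_{\bm{u}_{ki}}\bigl(-\sum_j y^\delta_{j,ki} \log p_{j,ki}\bigr) = \bm{p}_{ki} - \bm{y}^\delta_{ki}$, using that $\sum_j y^\delta_{j,ki} = (1-\delta) + K(\delta/K) = 1$ for each sample. Assembling over all $N$ columns gives $\nabla_{\bm{Z}} \tfrac{1}{N} l_{CE} = \tfrac{1}{N}(\bm{P} - \bm{Y}^\delta)$, and applying the chain rule through $\bm{u}_{ki} = \bm{z}_{ki} + \bm{b}$ yields $\nabla_{\bm{b}} \tfrac{1}{N} l_{CE} = \tfrac{1}{N}(\bm{P} - \bm{Y}^\delta)\mathbf{1}_N$.

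Finally, I invoke the sum rule for subdifferentials of convex functions — which applies here without any relative-interior obstruction, since the cross-entropy and bias quadratic are finite and smooth on all of $\mathbb{R}^{K\times N} \times \mathbb{R}^K$, while $\|\bm{Z}\|_*$ is a proper convex function on $\mathbb{R}^{K\times N}$. The two block optimality conditions become
$$\bm{0} \in \tfrac{1}{N}(\bm{P} - \bm{Y}^\delta) + \lambda_Z\,\partial\|\bm{Z}\|_*, \qquad \bm{0} = \tfrac{1}{N}(\bm{P} - \bm{Y}^\delta)\mathbf{1}_N + \lambda_b \bm{b}.$$
Rearranging yields precisely the two conditions stated in the lemma. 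There is no serious obstacle here: the argument is routine convex calculus, and the only step worth stating explicitly is the reduction $\sum_j y^\delta_{j,ki} = 1$, which is what collapses the full gradient into the clean softmax residual $\bm{p}_{ki} - \bm{y}^\delta_{ki}$ rather than a more complicated expression mixing softmax values with the smoothed target.
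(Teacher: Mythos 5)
Your proposal is correct and follows essentially the same route as the paper's proof: compute the log-softmax gradient to obtain the residual $\tfrac{1}{N}(\bm{P}-\bm{Y}^{\delta})$ blockwise in $\bm{Z}$ and $\bm{b}$, then combine with the nuclear-norm subdifferential and the quadratic term via the convex sum rule and set the subgradient to zero. Your explicit remarks on the normalization $\sum_j y^{\delta}_{j}=1$ and on why the sum rule applies are slightly more careful than the paper's, but the argument is the same.
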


\begin{proof}
    Consider
    \[
    \mathcal{L}(\bm{Z}, \bm{b}) =
        \frac{1}{N} {l_{CE}}( \bm{Z} + \bm{b} \mathbf{1}_N^\top, \bm{Y}^{\delta})  +
        \lambda_{Z} \|\bm{Z}\|_{\ast} + \frac{\lambda_{b}}{2} \|\bm{b}\|^{2}
    \]
        in (\ref{eq:ufm}). Define
    \begin{equation} \label{eq:phi}
        \phi(\bm{Z}, \bm{b}) = \frac{1}{N} l_{CE}(\bm{Z}+\bm{b}\mathbf{1}_N^\top, \bm{Y}^\delta) 
    = \frac{1}{N} \sum_{k=1}^K \sum_{i=1}^n l_{CE}(\bm{z}_{ki}+\bm{b}, \bm{y}^{\delta}_{k}), 
    \end{equation}
    where $\bm{y}^{\delta}_{k} = (1-\delta)\bm{e}_k + \frac{\delta}{K}\mathbf{1}_K$ is the smoothed target. The gradient of $\phi$ is
    \[
    \frac{\partial\phi}{\partial\bm{z}_{ki}} = 
    \frac{1}{N}\left( \bm{p}_{ki}- \bm{y}^{\delta}_{k} \right), 
    \quad
    \frac{\partial\phi}{\partial\bm{b}} = 
    \frac{1}{N}\sum_{k,i}\left( \bm{p}_{ki}- \bm{y}^{\delta}_{k} \right), 
    \]
    whose matrix form is: 
    \[
    \frac{\partial\phi}{\partial\bm{Z}} = 
    \frac{1}{N} \left( \bm{P} -\bm{Y}^{\delta} \right), 
    \quad
    \frac{\partial\phi}{\partial\bm{b}} = 
    \frac{1}{N} \left( \bm{P} -\bm{Y}^{\delta} \right) \mathbf{1}_N.
    \]
    Hence the gradient (subgradient) of $\mathcal{L}$ is
    \[
    \frac{\partial\mathcal{L}}{\partial \bm{Z}} = N^{-1}(\bm{P}-\bm{Y}^{\delta}) + \lambda_Z \partial \|\bm{Z}\|_{\ast}, 
    \quad
    \frac{\partial\mathcal{L}}{\partial \bm{b}} = N^{-1}(\bm{P}-\bm{Y}^{\delta})\mathbf{1}_N + \lambda_b\bm{b},
    \]
    where $\partial \|\bm{Z}\|_{\ast}$ is the subdifferential of the nuclear norm at $\bm{Z}$.
    Thus, $(\bm{Z}, \bm{b})$ is a global minimizer of $\mathcal{L}$ if its gradient (subgradient) is equal to zero, i.e.
    \[
    N^{-1}(\bm{Y}^{\delta}-\bm{P}) \in \lambda_Z \partial \|\bm{Z}\|_{\ast}, 
    \quad
    N^{-1}(\bm{Y}^{\delta}-\bm{P})\mathbf{1}_N = \lambda_b\bm{b}.
    \]
\end{proof}


\begin{lemma} \label{lm:z_mean}
    Assume that the number of classes $K$ is less than the feature dimension $d$, i.e., $K\le d$, and the dataset is balanced.  Then 
    the prediction vectors, formulated as $\bm{z}_{ki}=f_{\Theta}(\bm{x}_{ki}), (1\leq k \leq K,1\leq i \leq n)$ within each class collapse to their sample means $\bar{\bm{z}}_{k}$: 
        \begin{equation}
            \bm{z}_{ki} = \bar{\bm{z}}_{k}, \quad 1\leq i \leq n, 
        \end{equation}

        In other words, the prediction matrix $\bm{Z}$ can be written as the following factorized form: 
        \begin{equation}
            \bm{Z} = \widebar{\bm{Z}} \bm{Y} \in \mathbb{R}^{K\times N}
        \end{equation}
        where 
        \[ \widebar{\bm{Z}} = [\bar{\bm{z}}_{1}, \dots, \bar{\bm{z}}_{K}], \quad
        \bm{Y}= [\bm{e}_1 \mathbf{1}_n^{\top}, \cdots, \bm{e}_K \mathbf{1}_n^{\top}] .
        \]
\end{lemma}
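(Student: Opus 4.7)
The plan is to work with the convex reformulation (\ref{eq:ufm}) and show that any optimizer must be invariant under the block-averaging operator that sends $\bm{z}_{ki}$ to $\bar{\bm{z}}_k$. To formalize this, I would introduce
\[ \bm{\Pi} := \tfrac{1}{n}\bm{Y}^\top\bm{Y} \in \mathbb{R}^{N\times N}. \]
Because the dataset is balanced we have $\bm{Y}\bm{Y}^\top = n\bm{I}_K$, so $\bm{\Pi}$ is an orthogonal projection and its range is the subspace of $\mathbb{R}^N$ whose coordinates are constant on each of the $K$ class-blocks of size $n$. A direct check gives $\bm{Z}\bm{\Pi} = \widebar{\bm{Z}}\bm{Y}$, where $\widebar{\bm{Z}} = [\bar{\bm{z}}_1,\ldots,\bar{\bm{z}}_K]$ is the matrix of within-class means, so the lemma is equivalent to the assertion $\bm{Z}^\star = \bm{Z}^\star\bm{\Pi}$ at any global minimizer $(\bm{Z}^\star, \bm{b}^\star)$ of (\ref{eq:ufm}).

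The strategy is then to compare $\mathcal{L}(\bm{Z}^\star\bm{\Pi}, \bm{b}^\star)$ to $\mathcal{L}(\bm{Z}^\star, \bm{b}^\star)$ term by term. The bias term is unchanged. For the cross-entropy term, convexity of $l_{CE}(\cdot, \bm{y}_k^\delta)$ in its first argument plus Jensen's inequality gives
\[ \sum_{i=1}^n l_{CE}(\bm{z}_{ki}^\star + \bm{b}^\star, \bm{y}_k^\delta) \;\geq\; n\, l_{CE}(\bar{\bm{z}}_k^\star + \bm{b}^\star, \bm{y}_k^\delta), \]
so summing over $k$ shows the CE term does not increase under the substitution $\bm{Z}^\star \mapsto \bm{Z}^\star\bm{\Pi}$. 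For the nuclear-norm term, the weak bound $\|\bm{Z}^\star\bm{\Pi}\|_\ast \leq \|\bm{Z}^\star\|_\ast$ is immediate because $\bm{\Pi}$ has operator norm one.

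The crux of the proof will be upgrading this last inequality to a \emph{strict} one whenever $\bm{Z}^\star \neq \bm{Z}^\star\bm{\Pi}$. I would pick orthonormal bases $\bm{V}_1$ for $\mathrm{range}(\bm{\Pi})$ and $\bm{V}_2$ for its orthogonal complement, so that $[\bm{V}_1, \bm{V}_2]$ is orthogonal and $\bm{\Pi} = \bm{V}_1\bm{V}_1^\top$. Since right-multiplication by an orthogonal matrix preserves singular values, $\|\bm{Z}^\star\|_\ast = \|[\bm{A}, \bm{B}]\|_\ast$ with $\bm{A} := \bm{Z}^\star\bm{V}_1$ and $\bm{B} := \bm{Z}^\star\bm{V}_2$, while $\|\bm{Z}^\star\bm{\Pi}\|_\ast = \|\bm{A}\|_\ast$. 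The squared singular values of $[\bm{A},\bm{B}]$ are the eigenvalues of $\bm{A}\bm{A}^\top + \bm{B}\bm{B}^\top \succeq \bm{A}\bm{A}^\top$, and these two matrices have traces differing by $\|\bm{B}\|_F^2$. Whenever $\bm{B}\neq 0$, at least one eigenvalue strictly increases while none decrease, which yields $\|[\bm{A},\bm{B}]\|_\ast > \|\bm{A}\|_\ast$ and therefore $\|\bm{Z}^\star\|_\ast > \|\bm{Z}^\star\bm{\Pi}\|_\ast$.

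Combining the three term-wise comparisons, if $\bm{Z}^\star \neq \bm{Z}^\star\bm{\Pi}$ then $\mathcal{L}(\bm{Z}^\star\bm{\Pi}, \bm{b}^\star) < \mathcal{L}(\bm{Z}^\star, \bm{b}^\star)$, contradicting the optimality of $\bm{Z}^\star$. Hence $\bm{Z}^\star = \bm{Z}^\star\bm{\Pi} = \widebar{\bm{Z}}^\star\bm{Y}$, which is precisely the within-class collapse claimed. The main technical obstacle is this strict nuclear-norm contraction: the Jensen bound for CE and the weak monotonicity of the nuclear norm under orthogonal projections are textbook facts, but isolating that the drop is \emph{strict} when $\bm{Z}(\bm{I}_N - \bm{\Pi}) \neq 0$ is the step that rules out non-collapsed optima and genuinely leverages the low-rank-promoting nature of the nuclear-norm regularizer.
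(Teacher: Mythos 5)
Your proposal is correct, and it shares the paper's overall architecture (pass to the convex problem in $\bm{Z}$, compare the objective at $\bm{Z}$ with its block-averaged version, and use Jensen's inequality on the cross-entropy term), but it handles the decisive step---the nuclear-norm comparison---by a genuinely different route. The paper symmetrizes over all $n!$ permutations of the blocks $\bm{Z}_l$, uses invariance of $\|\cdot\|_\ast$ under column permutation together with convexity of the norm to get $\|\bm{Z}\|_\ast \geq \|\widebar{\bm{Z}}\bm{Y}\|_\ast$, and then \emph{asserts} that equality forces $\bm{Z}_l=\widebar{\bm{Z}}$ for all $l$; that equality claim is the weakest link, since for a general norm $\|\sum_i X_i\|=\sum_i\|X_i\|$ does not by itself force the summands to coincide. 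You instead realize the block-averaging as right-multiplication by the orthogonal projection $\bm{\Pi}=\tfrac1n\bm{Y}^\top\bm{Y}$, split $\bm{Z}^\star$ into $[\bm{A},\bm{B}]$ along $\mathrm{range}(\bm{\Pi})$ and its complement, and deduce strict decrease of the nuclear norm from Weyl monotonicity of the eigenvalues of $\bm{A}\bm{A}^\top+\bm{B}\bm{B}^\top$ versus $\bm{A}\bm{A}^\top$ plus the trace gap $\|\bm{B}\|_F^2>0$. This gives a fully rigorous strictness argument for the regularizer alone, which in turn means you never need the equality case of Jensen for the cross-entropy term---a welcome simplification, since softmax cross-entropy is only strictly convex modulo the all-ones direction of the logits, so the Jensen equality condition by itself would not pin down full collapse. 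In short: same skeleton, but your treatment of the nuclear-norm contraction is more careful than the paper's and closes a gap the paper leaves implicit.
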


\begin{proof}

    The proof follows from the convexity of the loss function. Recall that the loss function in (\ref{eq:ufm}) was given by  
    \[
    \mathcal{L}(\bm{Z}, \bm{b}) :=
        \phi(\bm{Z}, \bm{b})  +
        \lambda_{Z} \|\bm{Z}\|_{\ast} + \frac{\lambda_{b}}{2} \|\bm{b}\|^{2}, 
    \]
    where $\phi(\bm{Z}, \bm{b})$ is the Cross-Entropy (CE) or Label Smoothing (LS) loss (depending on the value of the smoothing parameter $\delta$) as defined in (\ref{eq:phi}). Recalling that $\{\bm{z}_{ik}\}_{i=1}^{n}$ belong to the same class and $\bar{\bm{z}}_k=n^{-1} \sum_{i=1}^{n} \bm{z}_{ki}$ is their respective sample mean. From Jensen's inequality, we have 
    \begin{align*}
    \phi(\bm{Z}, \bm{b}) &= \frac{1}{Kn}\sum_{k=1}^K \sum_{i=1}^n l_{CE} (\bm{z}_{ki}+b, \bm{y}_k^{\delta}) \\
    &\geq  \frac{1}{K} \sum_{k=1}^K l_{CE} \left( \frac{1}{n} \sum_{i=1}^n (\bm{z}_{ki}+b), \bm{y}_k^{\delta} \right) \\
    & =  \frac{1}{K} \sum_{k=1}^K l_{CE} (\bar{\bm{z}}_k + \bm{b}, \bm{y}_k^{\delta})
    \end{align*}
    where the inequality becomes equality only when $\bm{z}_{ki}=\bar{\bm{z}}_k$. 
    
    In the rest of the proof, we employ a permutation argument. Let $\bm{Z}_l = [\bm{z}_{1l}, \cdots, \bm{z}_{Kl}]$ for $1\leq l \leq n$ and $\Tilde{\bm{Z}} = [\bm{Z}_1, \cdots, \bm{Z}_n]$. Let $\Gamma_i$ represent a distinct permutation of $n$. Consider $\Tilde{\bm{Z}}_{\Gamma_i} = \Tilde{\bm{Z}} \Pi_{\Gamma_i}$, where $\Pi_{\Gamma_i}$ is a permutation matrix
     rearranging $\Tilde{\bm{Z}}$ so that the elements $\bm{Z}_l$ are ordered according to $\Gamma_i$. Then 
    \[
    \| \Tilde{\bm{Z}}_{\Gamma_i} \|_{\ast} = \| \Tilde{\bm{Z}} \|_{\ast} = \| {\bm{Z}} \|_\ast. 
    \]
    Since $\| \cdot \|_{\ast}$ is a convex function, we deduce
    \[
    \| {\bm{Z}} \|_\ast = \frac{1}{n!} \left( \sum_i \left\| \Tilde{\bm{Z}}_{\Gamma_i} \right\|_{\ast} \right) 
    \geq 
    \left\| \frac{1}{n!} \sum_i \Tilde{\bm{Z}}_{\Gamma_i} \right\|_{\ast},
    \]
    where the inequality becomes equality only when $\bm{Z}_l = \widebar{\bm{Z}} (1\leq l \leq n)$ or equivalently $\bm{z}_{ki}=\bar{\bm{z}}_k$. As a result, it holds that 
    \begin{align*}
    \mathcal{L}(\bm{Z}, \bm{b}) &=
        \phi(\bm{Z}, \bm{b})  +
        \lambda_{Z} \|\bm{Z}\|_{\ast} + \frac{\lambda_{b}}{2} \|\bm{b}\|^{2} \\ 
        & \geq 
        \frac{1}{K} \sum_{k=1}^K l_{CE} (\bar{z}_k + \bm{b}, \bm{y}_k^{\delta}) + \lambda_Z  \| \widebar{\bm{Z}} \|_\ast + \frac{\lambda_{b}}{2} \|\bm{b}\|^{2}   \\ 
        & = 
        \frac{1}{N} l_{CE} \left( \widebar{\bm{Z}}Y + \bm{b} \mathbf{1}_N^\top, \bm{Y}^{\delta} \right)
         + \lambda_Z  \| \widebar{\bm{Z}} \|_\ast + \frac{\lambda_{b}}{2} \|\bm{b}\|^{2}. 
    \end{align*}
    The global optimality of $\mathcal{L}(\bm{Z}, \bm{b})$ implies that $\bm{Z} = \widebar{\bm{Z}}Y$, i.e., $\bm{z}_{ki}=\bar{\bm{z}}_k$ for $1\leq k \leq n$.

\end{proof}



\begin{lemma} \label{lm:optim_eq}
Assume $\bm{z}_{ki} = \bar{\bm{z}}_k$, for $1\leq k \leq K, 1 \leq i \leq n$ and $\langle \bar{\bm{z}}_k, \mathbf{1}_K \rangle=0$, then it holds that
\begin{align*}
    & N^{-1}\left( (1-\delta)\bm{I}_K + \frac{\delta}{K}\bm{J}_K  - \widebar{\bm{P}} \right) = 
    \lambda_Z \left(
     n^{-1/2} \left[ \left(\widebar{\bm{Z}} \widebar{\bm{Z}}^\top \right)^{\dagger} \right]^{1/2} \widebar{\bm{Z}} + \widebar{\bm{R}}
     \right), 
     \\
   & \frac{n}{N} \left( (1-\delta)\bm{I}_K + \frac{\delta}{K}\bm{J}_K  - \widebar{\bm{P}} \right) \mathbf{1}_K = \lambda_b \bm{b},
\end{align*}
where $\bm{J}_K\in \mathbb{R}^{K\times K}$ is the ones matrix, $\widebar{\bm{Z}}=[\bar{\bm{z}}_1, \cdots, \bar{\bm{z}}_K] \in \mathbb{R}^{K\times K}$, and $\widebar{\bm{R}}$ satisfies $\widebar{\bm{R}} \widebar{\bm{Z}}^\top=0$, $\widebar{\bm{Z}}^\top \widebar{\bm{R}}=0$, and $\|\widebar{\bm{R}}\| \leq n^{-1/2}$.

In particular, if $\widebar{\bm{Z}}$ is of rank $K-1$, then 
\begin{equation*} 
    N^{-1}\left( (1-\delta)\bm{I}_K + \frac{\delta}{K} \bm{J}_K - \widebar{\bm{P}} \right) = 
    \frac{\lambda_Z}{\sqrt{n}} \left( \left[ \left(\widebar{\bm{Z}} \widebar{\bm{Z}}^\top \right)^{\dagger} \right]^{1/2} \widebar{\bm{Z}} \right). 
\end{equation*}

\end{lemma}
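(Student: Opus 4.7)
The plan is to specialize the first-order optimality condition of Lemma \ref{lm:optim} to the structure imposed by the two hypotheses $\bm{z}_{ki}=\bar{\bm{z}}_k$ and $\mathbf{1}_K^{\top}\widebar{\bm{Z}}=0$. The within-class collapse forces $\bm{Z}=\widebar{\bm{Z}}\bm{Y}$, and since the softmax is applied columnwise, the predictions share the same structure, giving $\bm{P}=\widebar{\bm{P}}\bm{Y}$. Writing $\mathbf{1}_N=\bm{Y}^{\top}\mathbf{1}_K$ shows that $\bm{Y}^{\delta}=\widebar{\bm{Y}}^{\delta}\bm{Y}$ with $\widebar{\bm{Y}}^{\delta}:=(1-\delta)\bm{I}_K+\frac{\delta}{K}\bm{J}_K$, so both members of the optimality equation acquire a common right factor $\bm{Y}$ and $\bm{Y}^{\delta}-\bm{P}=(\widebar{\bm{Y}}^{\delta}-\widebar{\bm{P}})\bm{Y}$. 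The bias equation then drops out immediately by right-multiplying by $\mathbf{1}_N$ and using $\bm{Y}\mathbf{1}_N=n\mathbf{1}_K$.

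For the subgradient equation, I would compute a compact SVD of $\bm{Z}=\widebar{\bm{Z}}\bm{Y}$ from that of $\widebar{\bm{Z}}=\widebar{\bm{U}}\widebar{\Sigma}\widebar{\bm{V}}^{\top}$. Because $\bm{Y}\bm{Y}^{\top}=n\bm{I}_K$, the rows of $\tfrac{1}{\sqrt{n}}\widebar{\bm{V}}^{\top}\bm{Y}$ are orthonormal, so a valid SVD of $\bm{Z}$ is $\bm{U}=\widebar{\bm{U}}$, $\Sigma=\sqrt{n}\widebar{\Sigma}$, $\bm{V}^{\top}=\tfrac{1}{\sqrt{n}}\widebar{\bm{V}}^{\top}\bm{Y}$. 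Consequently $\bm{U}\bm{V}^{\top}=\tfrac{1}{\sqrt{n}}\widebar{\bm{U}}\widebar{\bm{V}}^{\top}\bm{Y}=\tfrac{1}{\sqrt{n}}\bigl[(\widebar{\bm{Z}}\widebar{\bm{Z}}^{\top})^{\dagger}\bigr]^{1/2}\widebar{\bm{Z}}\,\bm{Y}$ (since $\widebar{\bm{U}}\widebar{\bm{V}}^{\top}$ is exactly the sign factor of $\widebar{\bm{Z}}$). Substituting a generic subgradient element $\bm{U}\bm{V}^{\top}+\bm{R}$ into the optimality condition and right-multiplying by $\tfrac{1}{n}\bm{Y}^{\top}$ yields the first stated identity with $\widebar{\bm{R}}:=\tfrac{1}{n}\bm{R}\bm{Y}^{\top}$.

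Next I would verify the three claimed properties of $\widebar{\bm{R}}$. The identity $\widebar{\bm{R}}\widebar{\bm{Z}}^{\top}=\tfrac{1}{n}\bm{R}\bm{Y}^{\top}\widebar{\bm{Z}}^{\top}=\tfrac{1}{n}\bm{R}\bm{Z}^{\top}=0$ follows from $\bm{R}\bm{V}=0$; similarly $\widebar{\bm{Z}}^{\top}\widebar{\bm{R}}=\widebar{\bm{V}}\widebar{\Sigma}\bm{U}^{\top}\widebar{\bm{R}}=0$ from $\bm{U}^{\top}\bm{R}=0$; and $\|\widebar{\bm{R}}\|_{op}\le\tfrac{1}{n}\|\bm{R}\|_{op}\|\bm{Y}^{\top}\|_{op}\le\tfrac{1}{n}\cdot 1\cdot\sqrt{n}=n^{-1/2}$ via $\bm{Y}\bm{Y}^{\top}=n\bm{I}_K$. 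For the rank-$(K-1)$ addendum, the assumption $\mathbf{1}_K^{\top}\widebar{\bm{Z}}=0$ means that $\mathbf{1}_K$ spans the full left nullspace of $\widebar{\bm{Z}}$, so $\bm{U}^{\top}\bm{R}=0$ forces each column of $\bm{R}$ to be a multiple of $\mathbf{1}_K$, i.e. $\bm{R}=\mathbf{1}_K\bm{r}^{\top}$. Meanwhile $\mathbf{1}_K^{\top}\widebar{\bm{Y}}^{\delta}=\mathbf{1}_K^{\top}=\mathbf{1}_K^{\top}\widebar{\bm{P}}$ (the latter because each column of $\widebar{\bm{P}}$ is a probability vector), so left-multiplying the subgradient equation by $\mathbf{1}_K^{\top}$ gives $\mathbf{1}_K^{\top}\bm{R}=K\bm{r}^{\top}=0$, whence $\bm{R}=0$ and $\widebar{\bm{R}}=0$.

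The principal obstacle I foresee is the nuclear-norm subdifferential step: one has to be careful that the "sign factor" $\widebar{\bm{U}}\widebar{\bm{V}}^{\top}$ of $\widebar{\bm{Z}}$ is correctly represented as $[(\widebar{\bm{Z}}\widebar{\bm{Z}}^{\top})^{\dagger}]^{1/2}\widebar{\bm{Z}}$ on the range and vanishes on the nullspace, and that the residual $\bm{R}$ in the subdifferential descends cleanly to a residual $\widebar{\bm{R}}$ on the centered side with the exactly matching orthogonality constraints. Once this translation is set up, the rest is bookkeeping with $\bm{Y}\bm{Y}^{\top}=n\bm{I}_K$ and the row-stochasticity of $\widebar{\bm{Y}}^{\delta}$ and $\widebar{\bm{P}}$.
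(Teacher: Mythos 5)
Your proposal is correct and follows essentially the same route as the paper's proof: specialize the optimality condition of Lemma~\ref{lm:optim} using $\bm{Z}=\widebar{\bm{Z}}\bm{Y}$ and $\bm{Y}\bm{Y}^\top=n\bm{I}_K$, identify $\bm{U}\bm{V}^\top$ via the SVD to get the $n^{-1/2}[(\widebar{\bm{Z}}\widebar{\bm{Z}}^\top)^{\dagger}]^{1/2}\widebar{\bm{Z}}$ term, and in the rank-$(K-1)$ case kill the residual by noting its columns lie along $\mathbf{1}_K$ while both sides annihilate $\mathbf{1}_K$ on the left. Your explicit definition $\widebar{\bm{R}}:=\tfrac{1}{n}\bm{R}\bm{Y}^\top$ and verification of its orthogonality and norm properties is a slightly more careful rendering of the step the paper states as ``$\bm{R}$ is in the form $\widebar{\bm{R}}\bm{Y}$,'' but the argument is the same.
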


\begin{proof}
    Under the assumption, $\bm{z}_{ki}=\bar{\bm{z}}_k$, $1\leq i \leq n$, and $\langle \bar{\bm{z}}_k, \mathbf{1}_K \rangle=0$, we have $\bm{Z}=\widebar{\bm{Z}} \bm{Y}$ and $\bm{P} = \widebar{\bm{P}} \bm{Y}$, where  $\widebar{\bm{P}}$ is defined as 
    \[
    \widebar{\bm{P}} = [\bar{\bm{p}}_1, \cdots, \bar{\bm{p}}_K]
    \]
    with $\bar{\bm{p}}_k$ as the probability vector w.r.t. the $\bar{\bm{z}}_k+\bm{b}$. Then the optimality condition in (\ref{eq:lemma1}) reduces to 
    \begin{equation} \label{eq:l2_1}
        N^{-1}\left( (1-\delta)\bm{I}_K + \frac{\delta}{K} \bm{J}_K - \widebar{\bm{P}} \right) \bm{Y} \in \lambda_Z  \partial \|\bm{Z} \|_{\ast}. 
    \end{equation}
    Let $\bm{Z} = \bm{U} \Sigma \bm{V}^\top$ be the SVD of $\bm{Z}$, then we have: 
    \[
    \partial\|\bm{Z}\|_{\ast} = \{ \bm{U} \bm{V}^\top + \bm{R} : \|\bm{R}\| \leq 1, \bm{U}^\top \bm{R}=0, \bm{R}\bm{V}=0 \}, 
    \]
    where 
    \[\bm{U} \bm{V}^\top = \left[ \left(\bm{Z} \bm{Z}^\top \right)^{\dagger} \right]^{1/2}\bm{Z}. 
    \]
    Since $\bm{Z} = \widebar{\bm{Z}} Y$ and $\bm{Y} \bm{Y}^\top = n\bm{I}_K$, we further get: 
    \[\bm{U} \bm{V}^\top = n^{-1/2} \left[ \left(\widebar{\bm{Z}} \widebar{\bm{Z}}^\top \right)^{\dagger} \right]^{1/2} \widebar{\bm{Z}} \bm{Y}. 
    \]
    Then (\ref{eq:l2_1}) is equivalent to: 
    \[ N^{-1}\left( (1-\delta)\bm{I}_K + \frac{\delta}{K} \bm{J}_K - \widebar{\bm{P}} \right) \bm{Y} = 
    \lambda_Z \left(
     n^{-1/2} \left[ \left(\widebar{\bm{Z}} \widebar{\bm{Z}}^\top \right)^{\dagger} \right]^{1/2} \widebar{\bm{Z}} \bm{Y} + \bm{R}
     \right), 
    \]
    where $\bm{R}$ is in the form of 
    \[
    \bm{R} = \widebar{\bm{R}}\bm{Y}, \quad
    \widebar{\bm{R}} = [\bar{\bm{r}}_1, \cdots, \bar{\bm{r}}_K]
    \]
    such that
    \[\widebar{\bm{R}}\bm{Y} \left( \widebar{\bm{Z}} \bm{Y} \right)^\top = n \widebar{\bm{R}} \widebar{\bm{Z}}^\top =0, 
    \quad
    \widebar{\bm{Z}}^\top \widebar{\bm{R}} =0, 
    \quad
    \| n^{1/2} \widebar{\bm{R}} \| \leq 1. 
    \]

    This further leads to 
    \[
    N^{-1}\left( (1-\delta)\bm{I}_K + \frac{\delta}{K} \bm{J}_K - \widebar{\bm{P}} \right) = 
    \lambda_Z \left(
     n^{-1/2} \left[ \left(\widebar{\bm{Z}} \widebar{\bm{Z}}^\top \right)^{\dagger} \right]^{1/2} \widebar{\bm{Z}} + \widebar{\bm{R}}
     \right)
    \]

    where $\widebar{\bm{R}} \widebar{\bm{Z}}^\top =0 $,  $\widebar{\bm{Z}}^\top \widebar{\bm{R}} =0 $ and $\| \widebar{\bm{R}} \| \leq n^{-1/2}$. 
    
   For $\bm{b}$, it is easy to see that the optimality condition in \eqref{eq:lemma1} reduces to
    \begin{align*}
    \lambda_b \bm{b} = N^{-1} \left( \bm{Y}^\delta - \bm{P} \right) \mathbf{1}_N 
    = \frac{n}{N} \left( (1-\delta)\bm{I}_K + \frac{\delta}{K}\bm{J}_K  - \widebar{\bm{P}} \right) \mathbf{1}_{K}.
    \end{align*}
    
    Now, if $\widebar{\bm{Z}}$ is of rank $K-1$, since $\mathbf{1}_K^{\top} \widebar{\bm{Z}} =0$, the columns of $\widebar{\bm{R}}$ are parallel to $\mathbf{1}_K$. Moreover, $\widebar{\bm{P}}$ is a positive left stochastic matrix with $\widebar{\bm{P}}^\top  \mathbf{1}_K =\mathbf{1}_K $, and therefore $\bm{I}_K-\widebar{\bm{P}}$ is also of rank $K-1$. The left stochasticity of $\widebar{\bm{P}}$ further deduces that $(\bm{I}_K-\widebar{\bm{P}})^T \mathbf{1}_K = 0$. In other words, $\mathbf{1}_K$ is in the left null space of $\bm{I}_K-\widebar{\bm{P}}$, which in turn implies $\widebar{\bm{R}}=0$. This leads to
    \[
    N^{-1}\left( (1-\delta)\bm{I}_K + \frac{\delta}{K} \bm{J}_K - \widebar{\bm{P}} \right) = 
    \frac{\lambda_Z}{\sqrt{n}} \left( \left[ \left(\widebar{\bm{Z}} \widebar{\bm{Z}}^\top \right)^{\dagger} \right]^{1/2} \widebar{\bm{Z}} \right).
    \]
    
\end{proof}

\begin{lemma} \label{lm:z_solution}
     Assume that the number of classes $K$ is less than or equal to the feature dimension $d$, i.e., $K\le d$, and the dataset is balanced.  Then 
    the global minimizer $ (\bm{Z}, \bm{b}) $ of (\ref{eq:ufm})  satisfies the following properties: 
        \begin{equation}
            \bm{Z} = \widebar{\bm{Z}} \bm{Y}, \quad
            \widebar{\bm{Z}} = a^{\delta}( \bm{I}_{K} -  \bm{J}_{K}/K), 
            \quad \bm{b} = 0. 
        \end{equation}
        
        In Particular, we have 
        \begin{itemize}
            \item [(i)] if $\sqrt{KN}\lambda_{Z} + \delta \geq 1$, then $a^{\delta} = 0$; 
            \item [(ii)] if $\sqrt{KN}\lambda_{Z} + \delta < 1$, then 
            \[ a^{\delta} =  \log \left( \frac{K}{\sqrt{KN}\lambda_{Z} +\delta} - K+1) \right)
            \] 
        \end{itemize}
    
\end{lemma}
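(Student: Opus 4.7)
The plan is to reduce the problem to a one-dimensional convex optimization by exploiting symmetry, then solve it via the optimality condition in Lemma \ref{lm:optim_eq}. By Lemma \ref{lm:z_mean}, any global minimizer of (\ref{eq:ufm}) satisfies $\bm{Z} = \widebar{\bm{Z}}\bm{Y}$, so the unknowns reduce to $\widebar{\bm{Z}} \in \mathbb{R}^{K\times K}$ and $\bm{b} \in \mathbb{R}^K$. The reduced loss is invariant under $(\widebar{\bm{Z}}, \bm{b}) \mapsto (\bm{\Pi}\widebar{\bm{Z}}\bm{\Pi}^\top, \bm{\Pi}\bm{b})$ for any $K\times K$ permutation matrix $\bm{\Pi}$, because $\bm{\Pi}^\top \bm{y}_k^\delta = \bm{y}_{\pi^{-1}(k)}^\delta$, softmax is equivariant under permutations, and both nuclear and Euclidean norms are invariant. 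Averaging a minimizer over all $K!$ permutations yields, by convexity of (\ref{eq:ufm}), a minimizer fixed by the full symmetric group, which forces $\widebar{\bm{Z}} = \alpha \bm{I}_K + \beta \bm{J}_K$ and $\bm{b} = c\,\mathbf{1}_K$ for some $\alpha, \beta, c \in \mathbb{R}$.

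Next I eliminate $\beta$ and $c$. Since softmax is translation invariant, the cross-entropy term depends only on $\alpha$. The penalty $\|\bm{b}\|^2 = Kc^2$ is strictly minimized at $c = 0$, giving $\bm{b} = \bm{0}$. Using $\bm{Y}\bm{Y}^\top = n\bm{I}_K$ I get $\|\widebar{\bm{Z}}\bm{Y}\|_* = \sqrt{n}\,\|\widebar{\bm{Z}}\|_*$, and the eigenvalues of $\alpha\bm{I}_K + \beta\bm{J}_K$ give
\[
\lambda_Z\,\|\widebar{\bm{Z}}\bm{Y}\|_* = \lambda_Z\sqrt{n}\,\bigl[(K-1)|\alpha| + |\alpha + K\beta|\bigr],
\]
which is minimized in $\beta$ at $\beta = -\alpha/K$. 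Setting $a := \alpha/K$ gives $\widebar{\bm{Z}} = a(K\bm{I}_K - \bm{J}_K)$, automatically rank $K-1$ with $\widebar{\bm{Z}}^\top\mathbf{1}_K = 0$.

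It remains to determine $a$. For $a\neq 0$ the hypotheses of Lemma \ref{lm:optim_eq} (rank $K-1$) hold, so I substitute the ansatz into
\[
N^{-1}\bigl((1-\delta)\bm{I}_K + (\delta/K)\bm{J}_K - \widebar{\bm{P}}\bigr) = \frac{\lambda_Z}{\sqrt{n}}\bigl[(\widebar{\bm{Z}}\widebar{\bm{Z}}^\top)^\dagger\bigr]^{1/2}\widebar{\bm{Z}}.
\]
Using $(K\bm{I}_K - \bm{J}_K)^2 = K(K\bm{I}_K - \bm{J}_K)$ I compute $(\widebar{\bm{Z}}\widebar{\bm{Z}}^\top)^\dagger = (K^2 a^2)^{-1}(\bm{I}_K - \bm{J}_K/K)$, collapsing the RHS to $(\lambda_Z/\sqrt{n})(\bm{I}_K - \bm{J}_K/K)$. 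With $\bar{\bm{z}}_k + \bm{b} = Ka\,\bm{e}_k - a\,\mathbf{1}_K$, softmax yields target-class probability $p_t = e^{Ka}/(e^{Ka}+K-1)$, so $\widebar{\bm{P}} = (p_t - p_n)\bm{I}_K + p_n \bm{J}_K$ is doubly stochastic (which also forces $\bm{b} = \bm{0}$ through the second equation of Lemma \ref{lm:optim_eq}). Matching $\bm{I}_K$-coefficients gives $p_t - p_n = 1 - \delta - \sqrt{KN}\lambda_Z$ (using $K\sqrt{n} = \sqrt{KN}$); solving yields $e^{Ka} = K/(\sqrt{KN}\lambda_Z + \delta) - (K-1)$, which is strictly greater than $1$ precisely when $\sqrt{KN}\lambda_Z + \delta < 1$, producing the formula in case (ii) after dividing by $K$.

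For the boundary case $\sqrt{KN}\lambda_Z + \delta \ge 1$, the formula above gives $a^\delta \le 0$, so I instead check optimality of $\bm{Z} = \bm{0}$ via the subdifferential of $\|\cdot\|_*$ at zero, namely $\{\bm{R}: \|\bm{R}\|_{\mathrm{op}} \le 1\}$. At $\bm{Z} = \bm{0}$ and $\bm{b} = \bm{0}$, $\bm{P} = K^{-1}\mathbf{1}_K\mathbf{1}_N^\top$, and a direct SVD computation (the nonzero singular values of $\bm{Y} - K^{-1}\mathbf{1}_K\mathbf{1}_N^\top$ are all $\sqrt{n}$) yields $\|N^{-1}(\bm{Y}^\delta - \bm{P})\|_{\mathrm{op}} = (1-\delta)/\sqrt{KN}$, so the optimality bound $\le \lambda_Z$ is exactly $\sqrt{KN}\lambda_Z + \delta \ge 1$, establishing case (i). The main obstacle I anticipate is the clean bookkeeping between $\bm{I}_K$- and $\bm{J}_K$-coefficients in the rank-$K-1$ optimality identity and verifying the stated closed form matches; the symmetry reductions above collapse this to a short algebraic manipulation, and convexity of (\ref{eq:ufm}) then promotes the KKT solution to a global minimizer.
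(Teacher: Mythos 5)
Your proposal is correct and arrives at the same closed form, but it differs from the paper's proof in one substantive step. The paper establishes the structural form $\widebar{\bm{Z}} = a\left(K\bm{I}_K - \mathbf{1}_K\mathbf{1}_K^\top\right)$ and $\bm{b}=\bm{0}$ by citing Lemma 8 of Zhou et al., and only then solves for $a$ via the optimality condition of Lemma \ref{lm:optim_eq}; you instead derive that structure from scratch by a permutation-equivariance and convexity argument (averaging a minimizer over the symmetric group to force $\widebar{\bm{Z}}=\alpha\bm{I}_K+\beta\bm{J}_K$, $\bm{b}=c\mathbf{1}_K$), then eliminate $\beta$ and $c$ using translation invariance of softmax, the eigenvalue formula $\|\widebar{\bm{Z}}\|_\ast=(K-1)|\alpha|+|\alpha+K\beta|$, and the $\|\bm{b}\|^2$ penalty. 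This makes the lemma self-contained, which is a genuine improvement over the external citation. The remainder is essentially the paper's argument: for $a\neq 0$ the rank-$(K-1)$ case of Lemma \ref{lm:optim_eq} yields $\frac{K}{K-1+e^{aK}}-\delta=\sqrt{KN}\lambda_Z$, and your boundary check for case (i) via $\partial\|\bm{0}\|_\ast=\{\bm{R}:\|\bm{R}\|_{\mathrm{op}}\le 1\}$ and the singular values of $\bm{Y}-K^{-1}\mathbf{1}_K\mathbf{1}_N^\top$ is equivalent to the paper's verification that $\|\widebar{\bm{R}}\|\le n^{-1/2}$. One caveat: the averaging argument produces \emph{a} group-invariant global minimizer, whereas the lemma (and the downstream Theorem \ref{th1}, which speaks of \emph{any} global optimizer) asserts that every minimizer has this form; to close that gap you would need a short uniqueness argument (strict convexity of the cross-entropy on the zero-sum hyperplane pins down the softmax outputs, after which the nuclear-norm and bias penalties pin down $\beta$ and $c$), or the citation the paper uses. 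Since the closed form and the optimal value are what the subsequent results consume, this is a minor point rather than a flaw in the computation.
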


\begin{proof}
    From Lemma \ref{lm:z_mean}, we have $\bm{Z} = \widebar{\bm{Z}}Y$. Moreover, according to Lemma 8 in \citep{zhou2022all}, there exists a constant $a$ such that the global optimum of (\ref{eq:ufm}) satisfies: 
    \[
    \bm{Z} = a \left( \bm{I}_K - \mathbf{1}_K \mathbf{1}_K^{\top} /K \right) \bm{Y}, 
    \quad \bm{b} = 0.  
    \]
    Equivalently, we have $\widebar{\bm{Z}} = a \left( \bm{I}_K - \mathbf{1}_K \mathbf{1}_K^{\top} /K \right) $. Further, we define $\widebar{\bm{P}}=[\bar{\bm{p}}_1, \cdots, \bar{\bm{p}}_K]$ with $\bar{\bm{p}}_k$ as the probability vector w.r.t. the logit $\bm{z}_k+\bm{b}$. Then,  
    \begin{equation} \label{eq:bar_P}
        \widebar{\bm{P}} 
        =\frac{ \bm{J}_K + (e^{a}-1) \bm{I}_K } { K-1 + e^{a} }, 
    \end{equation}
    and 
    \begin{align*}
     (1-\delta)\bm{I}_K + \frac{\delta}{K} \bm{J}_K - \widebar{\bm{P}} 
     & = 
    (1-\delta)\bm{I}_K + \frac{\delta}{K} \bm{J}_K - 
    \frac{ \bm{J}_K + (e^{a}-1) \bm{I}_K } { K-1 + e^{a} } \\
    & =
    \left( \frac{K}{K-1+e^{a}} - \delta \right) \left( \bm{I}_K - \frac{1}{K} \bm{J}_K \right)
    \end{align*}
    
    Recall that the first-order optimality condition as expressed in Lemma \ref{lm:optim} indicates that: 
    \begin{align} \label{eq:th1b_1}
        (1-\delta)\bm{I}_K + \frac{\delta}{K} \bm{J}_K -  \widebar{\bm{P}} & = 
     N \lambda_Z \left(  n^{-1/2} \left[ \left( \widebar{\bm{Z}} \widebar{\bm{Z}}^\top \right)^{\dagger} \right]^{1/2} \widebar{\bm{Z}} + \widebar{\bm{R}} \right) \\
     & =   N \lambda_Z  \left(  \text{sign}(a) \sqrt{ \frac{1}{n} } (\bm{I}_K - \bm{J}_K/K) + \widebar{\bm{R}} \right). 
    \end{align}
    
    Suppose $a\neq 0$, then $\widebar{\bm{Z}}$ has rank $K-1$ and we have $\widebar{\bm{R}}=0$. Thus the above optimality condition implies: 
    \[ \left( \frac{K}{K-1+e^{a}} - \delta \right) \left( \bm{I}_K - \bm{J}_K / K\right) = \text{sign}(a) \sqrt{KN} \lambda_Z (\bm{I}_K - \bm{J}_K/K), 
    \]
    which is equivalent to 
    \[ \frac{K}{K-1+e^{a}} - \delta = \text{sign}(a) \sqrt{NK} \lambda_Z. 
    \]
    For $a>0$, the above equation has the following solution: 
    \[
    a = \log\left( \frac{K}{\sqrt{NK}\lambda_Z + \delta} -K +1 \right) 
    \quad 
    \text{if} 
    \quad
    \sqrt{NK} \lambda_Z + \delta < 1. 
    \]
    If $\sqrt{NK} \lambda_Z + \delta \geq 1$, then we select $a=0,$ and noting that in that case $\widebar{\bm{P}}=K^{-1} \bm{J}_K$, we have that the optimality condition in (\ref{eq:th1b_1}) implies that $\widebar{\bm{R}}$ satisfies 
    \[
     (1-\delta)\bm{I}_K + \frac{\delta-1}{K} \bm{J}_K = N \lambda_Z \widebar{\bm{R}}. 
    \]
    We get 
    \[
    \widebar{\bm{R}}= \frac{1}{N\lambda_Z} \left[   (1-\delta)\bm{I}_K + \frac{\delta-1}{K} \bm{J}_K \right], 
    \]
    where  $ \|\widebar{\bm{R}} \| \leq n^{-1/2} $ meets the requirement of the optimality condition. 
    
\end{proof}


\section{Proof of Theorem \ref{th:optim}}

In this section of the appendices, we prove Theorem \ref{th:optim}. 

\begin{proof}

To provide insight into the accelerated convergence of the model under label smoothing loss, we examine the Hessian matrix of the empirical loss function,  defined as follows: 
\begin{equation} \label{eq:lwb}
    \min_{\bm{W} , \bm{H}} \phi (\bm{W}, \bm{H}) :=
    \frac{1}{N} {l_{CE}}( \bm{W}^{\top}\bm{H} , \bm{Y}^{\delta}). 
\end{equation}
Under commonly used deep learning frameworks, the model parameters are updated iteratively, and it is common and often practical to analyze the Hessian matrix with respect to $\bm{H}$, $\bm{W}$ individually rather than considering the full Hessian.  Particularly, we demonstrate that the Hessian of the empirical loss concerning  ${\bm{W}}$ when ${\bm{H}}$ is fixed and the Hessian w.r.t. ${\bm{H}}$ when ${\bm{W}}$ is fixed are positive semi-definite at the global optimizer under both cross-entropy and label smoothing losses. Furthermore, we establish that the condition numbers of these Hessian matrices are notably lower under the LS loss in comparison to the CE loss.




\textbf{Hessian matrix with respect to $\bm{Z}$}. Let $\bm{Z} = \bm{W}^\top \bm{H} \in \mathbb{R}^{K\times N}$ represent the prediction logit matrix. In the proof of Lemma \ref{lm:optim}, we obtained the first-order partial derivatives of the loss $\phi$: 
\[
    \frac{\partial\phi}{\partial\bm{z}_{ki}} = 
    \frac{1}{N}\left( \bm{p}_{ki}- \bm{y}^{\delta}_{k} \right), 
    \quad
    \frac{\partial\phi}{\partial\bm{b}} = 
    \frac{1}{N}\sum_{k,i}\left( \bm{p}_{ki}- \bm{y}^{\delta}_{k} \right), 
    \] 
where $\bm{p}_{ki}$ is the prediction for the $i$-th sample that belongs to the $k$-th class and $\bm{y}^{\delta}_{k}$ is the soft label for class $k$ with a smoothing parameter $\delta$. These partial derivatives lead to the corresponding second-order partial derivatives of the loss $\phi$: 
\[
\frac{\partial^2 \phi}{\partial \bm{z}^2_{ki}} 
= \frac{1}{N} \left( \text{diag}(\bm{p}_{ki}) - \bm{p}_{ki} \bm{p}_{ki}^\top  \right), 
\quad
\frac{\partial^2 \phi}{\partial\bm{z}_{ki} \partial\bm{z}_{k'i'}} =0, \forall (k,i)\neq (k', i'), 
\]

\[
\\
\frac{\partial^2\phi}{\partial \bm{b}^2}
= \frac{1}{N} \sum_{k,i} \left( \text{diag}(\bm{p}_{ki}) - \bm{p}_{ki} \bm{p}_{ki}^\top  \right), 
\quad
\frac{\partial^2\phi}{ \partial \bm{z}_{ki} \partial \bm{b}}
= \frac{1}{N} \left( \text{diag}(\bm{p}_{ki}) - \bm{p}_{ki} \bm{p}_{ki}^\top  \right). 
\]

From Theorem \ref{th1}, we have the global optimizer satisfies $\bm{z}^2_{ki} = \bar{\bm{z}}_{k}$ and $\bm{p}^2_{ki} = \bar{\bm{p}}_{k}, 1\leq k \leq K, 1\leq i \leq n$.  Consequently, it follows that $\frac{\partial^2 \phi}{\partial \bm{z}^2_{ki}} = \frac{\partial^2 \phi}{\partial \bm{z}^2_{ki'}} (\forall 1 \leq i, i' \leq n)$. To simply the notation, we denote 
\begin{equation} \label{eq:D}
    \bm{D}_k = \text{diag} (\bar{\bm{p}}_k) - \bar{\bm{p}}_k \bar{\bm{p}}_k^\top. 
\end{equation}
Then we have $ \frac{\partial^2 \phi}{\partial \bm{z}^2_{ki}} = \bm{D}_k/N$.

The closed-form solution for $\widebar{\bm{P}}$ in (\ref{eq:bar_P}) yields the expression: 
\begin{equation} \label{eq:avg_p}
    \bar{\bm{p}}_{k} = \frac{1}{K-1+e^{a^\delta }} \left( (e^{a^\delta }-1) \bm{e}_k + \mathbf{1}_K \right). 
\end{equation}

To simplify the notation, we denote 
\begin{equation} \label{eq:p}
    p_t=e^{a^\delta }/ (K-1+e^{a^\delta }), \quad 
    p_n=1/ (K-1+e^{a^\delta }), 
\end{equation}
where $p_t$ ($p_n$) is the predicted probability for the target (non-target) class at the global optimal solution. Under this notation, we have $p_t + (K-1) p_n=1$ and $\bar{\bm{p}}_k=(p_t - p_n) \bm{e}_k + p_n \mathbf{1}_K$. 

For any $1 \leq k \leq K$, $\bm{p}_k \bm{p}_k^\top$ is a positive matrix and the associated Laplacian $\bm{D}_k$ is positive-semidefinite with all eigenvalues non negative. The smallest eigenvalue of the Laplacian matrix $\bm{D}_k$ is $\sigma_1 = 0$ with the corresponding eigenvector $\bm{v}_1=\mathbf{1}_K / \|\mathbf{1}_K\|$. 

Define 
\[ \bm{v}_2 = (K\bm{e}_{1} - \mathbf{1}_K)/\|K\bm{e}_{1} - \mathbf{1}_K\|, 
\]

then we have 
\begin{align*}
\bm{D}_k \bm{v}_2 & = \text{diag} \left( \bar{\bm{p}}_k \right) \bm{v}_2 - \bar{\bm{p}}_k \bar{\bm{p}}_k^\top \bm{v}_2  \\ 
& = K p_t p_n \bm{v}_2, 
\end{align*}
from which we get $K p_t p_n $ is an eigenvalue of $\bm{D}_k$ with the corresponding eigenvector $\bm{v}_2$. 

Consider the outer-product matrix $\bm{p}_{k} \bm{p}_{k}^\top$, its largest eigenvalue is $\|\bm{p}_{k}\|^2$ with the eigenvector $\bm{p}_{k}/\|\bm{p}_{k}\|$. Its null space is of dimension $K-1$.  Particularly, we can find a set of the basis vectors for $\text{Null}(\bm{p}_{k} \bm{p}_{k}^\top)$ as follows: 
\begin{equation} \label{eq:null}
    \{ \bm{o}_1, \cdots, \bm{o}_{K-1}: \text{span}(\bm{o}_1, \bm{p}_{k}) =  \text{span}(\bm{e}_k, \bm{p}_{k}) \}, 
\end{equation}

which means the vectors $\bm{o}_1$ and $\bm{p}_{k}$ span the same 2D space as $\bm{e}_k$ and $\bm{p}_{k}$.

Note that 
\[
  \text{span} (\bm{e}_k, \bm{p}_{k}) = \text{span}(\bm{v}_1, \bm{v}_2), 
\]
where $\bm{v}_1, \bm{v}_2$ are the eigenvector for $\bm{D}_k$. 

Then we can show that $\bm{o}_{2}\cdots, \bm{o}_{K-1}$ are eigenvectors of $\bm{D}_k$ with the corresponding eigenvalue as $\sigma_3=p_n$.  Particularly, with $\bm{o}_{l}$ orthonormal to both $\bar{\bm{p}}_k$ and $\bm{e}_k$, we have 
\begin{align*}
    \bm{D}_k \bm{o}_{l} & =  \text{diag} \left( \bar{\bm{p}}_k \right) \bm{o}_{l}
    - \bar{\bm{p}}_k \bar{\bm{p}}_k^\top  \bm{o}_{l} \\ 
    & =  \text{diag} \left( \bar{\bm{p}}_k \right) \bm{o}_{l} \\ 
    & = p_n \bm{o}_{l}, 
\end{align*}
for $\bm{o}_{l} (2\leq l \leq K-1)$ defined in (\ref{eq:null}). 

To summarize the unique eigenvalues of the matrix $\bm{D}$ are 
\begin{equation} \label{eq:lambda_D}
    \sigma_1 = 0, \quad
\sigma_2 = p_n,  \quad
\sigma_3 = K p_t p_n 
\end{equation}

with $p_t$ and $p_n$ defined in (\ref{eq:p}).

\textbf{Hessian matrix with respect to $\bm{H}$}. The gradient of $\phi$ with respect to $\bm{h}_{ki}$ is 
\[
\frac{\partial\phi}{\partial\bm{h}_{ki}} = \frac{\partial\bm{z}_{ki}}{\partial\bm{h}_{ki}} \frac{\partial\phi}{\partial\bm{z}_{ki}} = \bm{W} \frac{\partial\phi}{\partial\bm{z}_{ki}}.
\]
Further, we can easily get the corresponding Hessian: 
\begin{equation} \label{eq:Hh}
    \frac{\partial^2\phi}{\partial\bm{h}^2_{ki}} = \bm{W} \frac{\partial^2\phi}{\partial\bm{z}^2_{ki}} \bm{W}^\top
    = \frac{1}{N} \bm{W}  \bm{D}_k \bm{W}^\top, 
    \quad
    \frac{\partial^2\phi}{\partial\bm{h}_{ki} \partial\bm{h}_{k'i'}} = 0, 
\end{equation}

where $\bm{D}_k$ is the Laplacian matrix as defined in (\ref{eq:D}). 

 From Theorem \ref{th1}, we have 
\[
\bm{W} = \left( \frac{\lambda_{H} n}{\lambda_W} \right)^{1/4} \sqrt{a^{\delta}} \bm{P} \left( \bm{I}_K - \bm{J}_K/K \right). 
\]

For simplicity, we denote $a^\delta_W = (\lambda_{H} n/\lambda_W)^{1/4 } \sqrt{a^{\delta}}$, then $\bm{W} = a^\delta_W \bm{P} (\bm{I}_K-\bm{J}_K/K)$. Under this notation, we have 
\begin{align} \label{eq:wdw}
    \frac{1}{N} \bm{W}  \bm{D}_k \bm{W}^\top & = \frac{(a^\delta_W)^2}{N} \bm{P} (\bm{I}_K-\bm{J}_K/K) \bm{D}_k  (\bm{I}_K-\bm{J}_K/K) \bm{P}^\top \\
    & = \frac{(a^\delta_W)^2}{N} \bm{P} \bm{D}_k \bm{P}^\top. 
\end{align}
Note that $\bm{P} \in \mathbb{R}^{d \times K}, (d>K)$ is a partial orthogonal matrix. Given the eigenvalues of $\bm{D}_k$ in (\ref{eq:lambda_D}), the eigenvalues for (\ref{eq:wdw}) are: 
\begin{equation} \label{eq:lambda_h}
    \lambda_1 = 0,\quad \lambda_2 = \frac{(a^\delta_W)^2}{N} p_n, \quad \lambda_3=\frac{(a^\delta_W)^2}{N} K p_t p_n
\end{equation}

with the corresponding multiplicities $m(\lambda_1) = 1+d-K$,   $m(\lambda_2) = K-2$, $m(\lambda_3) = 1$. 

From (\ref{eq:Hh}), the hessian of $\phi$ with respect to $\bm{h} = \text{vec}(\bm{H})$ is a block diagonal matrix which can be expressed as 
\begin{equation}
    \frac{\partial^2 \phi}{\partial \bm{h}^2} =  
    \frac{1}{N}\text{blkdiag}(\bm{W}\bm{D}_1\bm{W}^\top, \cdots, \bm{W}\bm{D}_1\bm{W}^\top, \cdots, \bm{W}\bm{D}_K\bm{W}^\top, \cdots, \bm{W}\bm{D}_K\bm{W}^\top). 
\end{equation}
The unique eigenvalues of the Hessian matrix $\frac{\partial^2 \phi}{\partial \bm{h}^2}$ are the same as provided in (\ref{eq:lambda_h}). Given that the Hessian matrix contains a zero eigenvalue, our analysis centers on its condition number within the non-zero eigenvalue space. This is calculated as follows: 
\begin{equation} \label{eq:condH}
\kappa(\nabla^{2}_{\bm{h}}\phi ) = \lambda_3/\lambda_2 =  K p_t
\end{equation}
Considering the formula for $p_t$ given in (\ref{eq:p}), an increase in $\delta$ leads to a decrease in  $p_t$, subsequently resulting in a smaller condition number.

\textbf{Hessian matrix with respect to $\bm{W}$}.  The gradient of $\phi$ with respect to $\bm{w}_{l} (l=1,\cdots,K)$ is 
\[
\frac{\partial\phi}{\partial\bm{w}_{l}} = \frac{\partial\bm{z}_{ki}}{\partial\bm{w}_{l}} \frac{\partial\phi}{\partial\bm{z}_{ki}}.
\]
Further, we get the corresponding Hessian: 
\begin{align*}
    \frac{\partial^2\phi}{\partial\bm{w}_{l} \partial\bm{w}_{l'}}  
    & = \sum_{k,i} \frac{\partial\bm{z}_{ki}}{\partial\bm{w}_{l'}} 
    \frac{\partial^2\phi}{\partial\bm{z}_{ki}^2}
    \left( \frac{\partial\bm{z}_{ki}}{\partial\bm{w}_{l}} \right)^\top \\
    & = \sum_{k,i} \bm{h}_{ki} \bm{e}_{l'}^\top 
    \frac{\partial^2\phi}{\partial\bm{z}_{ki}^2}
    \bm{e}_{l} \bm{h}_{ki}^\top \\
    & = \sum_{k,i} \bm{D}_k(l', l) \bm{h}_{ki} \bm{h}_{ki}^\top \\
    & = n \sum_{k} \bm{D}_k(l', l) \bar{\bm{h}}_{k} \bar{\bm{h}}_{k}^\top
\end{align*}

where $\bm{D}_k(l', l)$ is $(l', l)$-th element in $\bm{D}_k$ and $\bar{\bm{h}}_{k}$ is the $k$-th column vector in $\widebar{\bm{H}}$ as defined in \ref{eq:mH_a}. From the defination of $\bm{D}_k$ in (\ref{eq:D}), we have 
\[ \frac{\partial^2\phi}{\partial\bm{w}_{l} \partial\bm{w}_{l'}}  =
\begin{cases}
    n\widebar{\bm{H}} \left[ \text{diag}(\bar{\bm{p}}_l) - \text{diag}(\bar{\bm{p}}_l)^2  \right] \widebar{\bm{H}}^\top  & \text{if } l = l' \\
    n\widebar{\bm{H}} \left[ - \text{diag}(\bar{\bm{p}}_l) \text{diag}(\bar{\bm{p}}_{l'}) \right] \widebar{\bm{H}}^\top & \text{if }  l \neq l'
\end{cases} 
\]
where $\bar{\bm{p}}_l$ is the average prediction vector as defined in (\ref{eq:avg_p}). Particularly, the $l$-th element of $\bar{\bm{p}}_l$ equals $p_t$ and the others equal $p_n$.

From (\ref{eq:mH_a}), we have 
\[\widebar{\bm{H}} = \left( \frac{\lambda_W}{\lambda_H n} \right)^{1/4} \sqrt{a^{\delta}} \bm{P} \left( \bm{I}_K - {\bm{J}_K/K} \right). 
\]
Hence, the Hessian of $\phi$ w.r.t. the  $\bm{w} =\text{vec}(\bm{W})$ can be written as: 

\begin{equation} \label{eq:Hh}
    \frac{\partial^2\phi}{\partial\bm{w}^2 } = n \left( \frac{\lambda_W}{\lambda_H n} \right)^{1/2} a^\delta
    \bm{D}_P \bm{D}_\Pi \bm{S} \bm{D}_\Pi \bm{D}_P^\top
\end{equation}
where
\begin{equation} \label{eq: mDp}
    \bm{D}_P = \text{blkdiag}({\bm{P}}, \cdots, {\bm{P}}) \in \mathbb{R}^{Kd \times K^2}, 
\end{equation}
\begin{equation} \label{eq: mDpi}
    \bm{D}_\Pi = \text{blkdiag}(\bm{\Pi}, \cdots, \bm{\Pi}) \in \mathbb{R}^{K^2 \times K^2}, 
    \quad
    \bm{\Pi} = \bm{I}_K - \frac{ \bm{1}_K \bm{1}_K^\top }{K}
\end{equation}

and 
\begin{equation}
    \bm{S} = 
    \begin{bmatrix}
    \Lambda_1 &         &  \\
        & \ddots  &  \\
        &         & \Lambda_K
\end{bmatrix}
- 
\begin{bmatrix}
    \Lambda_1 \\
    \vdots \\
    \Lambda_K
\end{bmatrix}
\begin{bmatrix}
    \Lambda_1 & \cdots & \Lambda_K
\end{bmatrix}
\end{equation}
with $\Lambda_l = \text{diag} \left( \bar{\bm{p}}_l \right)$.

Since $\bm{P}$ is a partial orthogonal matrix,  the condition number of (\ref{eq:Hh}) is the same as the condition number of the following matrix: 
\begin{equation} \label{eq:B}
    \bm{B} = \bm{D}_\Pi \bm{S} \bm{D}_\Pi. 
\end{equation}
Subsequently, we proceed to derive the eigenvalues for the above matrix  $\bm{B}$.

First, considering an eigenvector $\bm{v}$ of $\bm{B}$,  let $\Delta$  be a $K\times K $ matrix such that $\bm{v} = \text{vec} (\Delta)$. Considering any $\Delta$ satisfying $\Delta \bm{\Pi} =0 $ or  $\Delta^\top \bm{\Pi} =0$,  for the corresponding $\bm{v} = \text{vec} (\Delta)$ we get $\bm{D}_\Pi \bm{v} =0 $ and $\bm{v}^\top \bm{D}_\Pi=0$, wich further yields $\bm{B} \bm{v} =0 $. 
Since $\text{rank}(\bm{\Pi})=K-1$,  it follows that $\lambda_1=0$ is an  eigenvalue of  $\bm{B}$ with multiplicity $2K-1$.

On the other hand, consider any $\Delta$ that satisfies the conditions: 
\begin{equation} \label{eq:eig_pn}
    \text{diag}(\Delta)=0,  \quad
\Delta \bm{1}_K = \Delta^\top \bm{1}_K =0.  
\end{equation}
It is noteworthy that from $\Delta \bm{1}_K = \Delta^\top \bm{1}_K =0$, the corresponding vector $\bm{v}$ satisfies $\bm{D}_\Pi \bm{v} = \bm{v}$. If $\Delta$ further satisfies $\text{diag}(\Delta)=0$ then $\bm{S} \bm{v} = p_n \bm{v}$. Consequently, we deduce that 
\[
\bm{B} \bm{v} = \bm{D}_\Pi \bm{S} \bm{D}_\Pi \bm{v}  = \bm{D}_\Pi \bm{S} \bm{v}  = \bm{D}_\Pi p_n \bm{v} = p_n \bm{v}
\]
We can conclude that $p_n$ is an eigenvalue of $\bm{B}$ with multiplicity $K^2-K-(2K-1) = K^2 - 3K+1$. 

Next, let us consider the matrix $\Delta=\bm{\Pi} \text{diag} (\bm{u}) \bm{\Pi}$, where $\bm{u}$ is a vector satisfying  $\bm{u}^\top \bm{1}_K=0$.  The vectorized form of $\Delta$ can be denoted as 
\[\bm{v} = \text{vec}(\Delta) = 
    \begin{bmatrix}
        \bm{\Pi}^\top \text{diag}(\bm{u}) \bm{\Pi}_1 \\
        \bm{\Pi}^\top \text{diag}(\bm{u}) \bm{\Pi}_2\\
        \vdots \\
        \bm{\Pi}^\top \text{diag}(\bm{u}) \bm{\Pi}_K \\
    \end{bmatrix}, 
\]
where $\bm{\Pi}_k$ represents the $k$-th column of the matrix $\bm{\Pi}$. Due to the properties $\bm{\Pi} = \bm{\Pi}^\top$ and $\bm{\Pi}^2 = \bm{\Pi}$, it follows that $\bm{D}_\Pi \bm{v} = \bm{v}$.  Additionally, given $\bm{u}^\top \bm{1}_K=0$, we obtain $\bm{S} \bm{v} = \lambda_2 \bm{v}$ with $\lambda_2 = (1-p_t+p_n) \cdot \frac{p_n+(K-1)p_t}{K}$. Consequently, we conclude that $\lambda_2 = (1-p_t+p_n) \cdot \frac{p_n+(K-1)p_t}{K}$ is an eigenvalue of $\bm{B}$ with a multiplicity equal to the degrees of freedom of $\bm{u}$, which is $K-1$.

Lastly, considering $\bm{v}=\text{vec} (\bm{\Pi} )$, we observe that $\bm{D}_\Pi \bm{v} = \bm{v}$. Further, we have $\bm{S} \bm{v} = K p_t p_n \bm{v}$.  Hence, we obtain 
\[
\bm{B} \bm{v} = \bm{D}_\Pi \bm{S} \bm{D}_\Pi \bm{v} 
 = \bm{D}_\Pi \bm{S} \bm{v}  = K p_t p_n \bm{v}. 
\]
Therefore,  $\bm{v}=\text{vec} ( \bm{\Pi} ) $ is an eigenvector of $\bm{B}$ with eigenvalue $\lambda_3=K p_t p_n$. 

In summary, we identify the distinct eigenvalues of the matrix $\bm{B}$ as defined in  (\ref{eq:B}) as follows:
\begin{equation}
\lambda_0 = 0, \quad
\lambda_1 = p_n, \quad
\lambda_2 = (1-p_t+p_n) \cdot \frac{p_n+(K-1)p_t}{K}, \quad
\lambda_3 = K p_n p_t
\end{equation}
with the corresponding multiplicities  $2K-1$, $K^2-3K+1$, $K-1$, and $1$,  respectively. 

Since the matrix $\bm{B}$ has zero eigenvalues, we consider its condition number within the non-zero eigenvalue subspace,  which is given by 
$\lambda_3/\lambda_1 = K p_t$. 
Consequently, we straightforwardly determine the condition number of the Hessian $\frac{\partial^2\phi}{\partial\bm{w}^2 }$ as defined in (\ref{eq:Hh}) is 
\begin{equation} \label{eq:condW}
    \kappa(\nabla^{2}_{\bm{w}}\phi) = K p_t.
\end{equation}

From the formula of $p_t$ given in (\ref{eq:p}), it is evident 
 that increasing $\delta$ leads to a decrease of $p_t$,  and consequently a reduction in the condition number.
 
 By combining (\ref{eq:condH}) and (\ref{eq:condW}), we demonstrate that the Hessian matrices, $\nabla^{2}_{\bm{W}} \phi (\bm{W}, \bm{H})$ and $\nabla^{2}_{\bm{H}} \phi(\bm{W}, \bm{H})$, are positive semi-definite at the global optimizer. 
Moreover, the condition numbers of these Hessian matrices are notably lower under the label smoothing loss (with $0<\delta < 1- \sqrt{KN\lambda_W \lambda_H}$) compared to the cross-entropy loss (with $\delta=0$). This observation suggests that the optimization landscape of the empirical loss function is better conditioned around its global minimizer under label smoothing loss, thus completing our proof.


\end{proof}

\end{document}